\newcommand{\LL}{\mathcal{L}}
\newcommand\EE{\mathbb{E}}
\newcommand\PP{\mathbb{P}}
\newcommand{\norm}[1]{\left\lVert#1\right\rVert} 
\newcommand{\M}[2]{T_{#1\mid#2}}
\newcommand{\Minv}[2]{T^{\dagger}_{#1\mid#2}}
\newcommand{\KL}[2]{D_{KL}(#1\mid\mid#2)}
\newcommand{\JS}[2]{D_{JS}(#1\mid\mid#2)}
\newcommand{\XX}[2]{D_{\chi^2}(#1\mid\mid#2)}
\newcommand{\FD}[2]{D_{f}(#1\mid\mid#2)}
\newtheorem{proposition}{Proposition}
\newtheorem{lemma}{Lemma}
\theoremstyle{definition}
\theoremstyle{definition}
\title{Hierarchical Adversarially Learned \\
      \centering Inference}
\def\equationautorefname~#1\null{%
  Eq.~#1\null
}
\def\figureautorefname~#1\null{%
  Fig.~#1\null
}
\author{Mohamed Ishmael Belghazi$^{1}$, Sai Rajeswar$^{1}$, Olivier Mastropietro$^{1}$,\\
  \bf{Negar Rostamzadeh$^{2}$}, \bf{Jovana Mitrovic$^{2}$} and \bf{Aaron Courville$^{1\dagger}$}\\
  $^1$ MILA, Universit\'e de Montr\'eal, \texttt{firstname.lastname@umontreal.ca}.\\
  $^2$ Element AI,
  \texttt{negar@elementai.com}.\\
  $^3$ DeepMind, \texttt{mitrovic@stats.ox.ac.uk}. \\
  $^\dagger$CIFAR Fellow.\\
}
\begin{document}

\maketitle

\begin{abstract}
We propose a novel hierarchical generative model with a simple
  Markovian structure and a corresponding inference model. Both the
  generative and inference model are trained using the adversarial
  learning paradigm. We demonstrate that the hierarchical structure
  supports the learning of progressively more abstract
  representations as well as providing semantically meaningful reconstructions with different levels of fidelity. 
  Furthermore, we show that minimizing the Jensen-Shanon divergence between the generative and inference network is enough to minimize the reconstruction error. 
 The resulting semantically meaningful hierarchical latent structure discovery is exemplified on the CelebA dataset. 
 There, we show that the features learned by our model in an unsupervised way outperform the best handcrafted features. Furthermore, the extracted features remain competitive when compared to several recent deep supervised approaches on an attribute prediction task on CelebA. 
  Finally, we leverage the model's inference network to achieve state-of-the-art performance on a semi-supervised variant of the MNIST digit classification task.  
 \end{abstract}

\section{Introduction}
\label{intro}

Deep generative models represent powerful approaches to modeling highly
complex high-dimensional data. There has been a lot of recent
research geared towards the advancement of deep generative
modeling strategies, including Variational
Autoencoders (VAE) ~\citep{kingma2013auto}, autoregressive
models~\citep{van2016conditional, oord2016pixel} and hybrid
models~\citep{GulrajaniKATVVC16,NguyenYBDC16}. However, Generative Adversarial Networks (GANs) ~\citep{goodfellow2014generative} have emerged as the learning paradigm of choice across a varied range of tasks, especially in computer vision~\cite{ZhuPIE17}, simulation and robotics \cite{FinnGL16} \cite{ShrivastavaPTSW16}. GANs cast the learning of a generative network in the form of a game between the
generative and discriminator networks. While the discriminator is
trained to distinguish between the true and generated examples, the generative model is trained to fool the discriminator. Using a discriminator network in GANs avoids the need for an explicit reconstruction-based loss function. This allows this model class to generate visually sharper images
than VAEs while simultaneously enjoying faster sampling than autoregressive models.

Recent work, known as either ALI~\citep{dumoulin2016adversarially} or BiGAN~\citep{donahue2016adversarial}, has shown that the adversarial learning paradigm can be extended to incorporate the learning of an inference network. While the inference network, or encoder, maps training examples $\bm{x}$ to a latent space variable
$\bm{z}$, the decoder plays the role of the standard GAN generator mapping from space of the latent variables (that is typically sampled from some factorial
distribution) into the data space. In ALI, the discriminator is trained to distinguish between the encoder and the decoder, while the encoder and decoder are trained to conspire together to fool the discriminator. Unlike some approaches that
hybridize VAE-style inference with GAN-style generative learning
(e.g. \cite{larsen2015autoencoding}, \cite{ChenDHSSA16}),
the encoder and decoder in ALI use a purely adversarial approach. One big advantage of adopting  an adversarial-only formalism is demonstrated by the
high-quality of the generated samples. Additionally, we are given a
mechanism to infer the latent code associated with a true data example.

One interesting feature highlighted in the original ALI
work~\citep{dumoulin2016adversarially} is that even though the encoder and
decoder models are never explicitly trained to perform reconstruction, this can nevertheless be easily done by projecting data samples via the encoder into the latent space, copying these values across to the latent variable layer of the decoder and projecting them back to the data space. 
Doing this yields reconstructions that often preserve some semantic features of the
original input data, but are perceptually relatively different from the original
samples. These observations naturally lead to the question of the source of the
discrepancy between the data samples and their ALI reconstructions. Is the
discrepancy due to a failure of the adversarial training paradigm, or is it
due to the more standard challenge of compressing the information from the data into a rather restrictive latent feature vector?
\cite{Ulyanov2017} show that an improvement in 
reconstructions is achievable when additional terms which explicitly minimize reconstruction error in the data space are added to the training objective. \cite{Li2017_ALICE} palliates to the non-identifiability issues pertaining to bidirectional adversarial training by augmenting the generator's loss with an adversarial cycle consistency loss.

In this paper we explore issues surrounding the representation of complex,
richly-structured data, such as natural images, in the context of a novel, hierarchical generative model, Hierarchical Adversarially Learned Inference (HALI), which represents a hierarchical extension of ALI. We show that within a purely adversarial training paradigm, and by exploiting the model's hierarchical structure, we can modulate the perceptual fidelity of the
reconstructions. 
We provide theoretical arguments for why HALI's
adversarial game should be sufficient to minimize the reconstruction cost and show empirical evidence supporting this perspective. Finally, we evaluate the usefulness of the learned representations on a semi-supervised task on MNIST and an attribution prediction task on the CelebA dataset. 

\section{Related work}
\label{related work}
Our work fits into the general trend of hybrid approaches to
generative modeling that combine aspects of VAEs and GANs. For example, Adversarial Autoencoders \citep{makhzani2015adversarial} replace the Kullback-Leibler divergence that appears in the training objective for VAEs with an adversarial discriminator that learns to distinguish between samples from the approximate posterior and the prior. A second line of research has been directed towards replacing the reconstruction penalty from the VAE objective with GANs or other kinds of auxiliary losses. Examples of this include \cite{larsen2015autoencoding} that combines the GAN generator and the VAE decoder into one network
and \cite{lamb2016discriminative} that uses the loss of a  pre-trained classifier as an additional reconstruction loss in the VAE objective. Another research direction has been focused on augmenting GANs with inference machinery. One particular approach is given by \cite{dumoulin2016adversarially, donahue2016adversarial}, where, like in our approach, there is a separate inference network that is jointly trained with the usual GAN discriminator and generator. \cite{karaletsos2016adversarial} presents a theoretical framework to jointly train inference networks and generators defined on directed acyclic graphs by leverage multiple discriminators defined nodes and their parents.
Another related work is that of \cite{HuangLPHB16} which takes advantage
of the representational information coming from a pre-trained discriminator. 
Their model decomposes the data generating task into multiple subtasks, where each level outputs an intermediate representation conditioned on the representations  from higher level. A stack of discriminators is employed to provide signals for
these intermediate representations. The idea of stacking discriminator can be traced back to \cite{denton2015deep} which used used a succession of convolutional networks within a Laplacian pyramid framework to progressively increase the resolution of the generated images.

\section{Hierachical Adversarially Learned Inference}
The goal of generative modeling is to capture the data-generating process with a probabilistic model. Most real-world data is highly complex and thus, the exact modeling of the underlying probability density function is usually computationally intractable. Motivated by this fact, GANs \citep{goodfellow2014generative} model the data-generating distribution as a transformation of some fixed distribution over latent variables. In particular, the adversarial loss, through a discriminator network, forces the generator network to produce samples that are close to those of the data-generating distribution. 
%
While GANs are flexible and provide good approximations to the true data-generating mechanism, their original formulation does not permit inference on the latent variables.
In order to mitigate this, Adversarially Learned Inference (ALI) \citep{dumoulin2016adversarially} extends the GAN framework to include an inference network that encodes the data into the latent space.
The discriminator is then trained to discriminate between the joint distribution of the data and latent causes coming from the generator and inference network. 
Thus, the ALI objective encourages a matching of the two joint distributions, which also results in all the marginals and conditional distributions being matched. This enables inference on the latent variables.

We endeavor to improve on ALI in two aspects. First, as reconstructions from ALI only
loosely match the input on a perceptual level, we want to achieve better perceptual matching in the reconstructions. 
Second, we wish to be able to compress the observables, $\bm{x}$, using a sequence of composed features maps, leading to a distilled hierarchy of stochastic latent representations, denoted by $\bm{z}_1$ to $\bm{z}_L$. 
Note that, as a consequence of the data processing inequality\citep{cover2012elements}, latent representations higher up in the hierarchy cannot contain more information than those situated lower in the hierarchy. In information-theoretic terms, the conditional entropy of the observables given a latent variable is non-increasing as we ascend the hierarchy. 
This loss of information can be seen as responsible for the perceptual discrepancy observed in ALI's reconstructions. 
Thus, the question we seek to answer becomes:
  How can we achieve high perceptual fidelity of the data reconstructions while also having a compressed latent space that is strongly coupled with the observables?
In this paper, we propose to answer this using a novel model, Hierarchical Adversarially Learned Inference (HALI), that uses a simple hierarchical Markovian inference network that is matched through adversarial training to a similarly constructed generator network. Furthermore, we discuss the hierarchy of reconstructions induced by the HALI's hierarchical inference network and show that the resulting reconstruction errors are implicitly minimized during adversarial training. Also, we leverage HALI's hierarchial inference network to offer a novel approach to semi-supervised learning in generative adversarial models.
  
\subsection{A Model for Hierarchical Features}
Denote by $\mathcal{P}(S)$ the set of all probability measures
on some set $S$. Let $\M{Z}{X}$ be a Markov kernel associating to each element $x\in X$ a probability measure $\PP_{Z \mid X=x} \in \mathcal{P}(Z)$.
Given two Markov kernels $\M{W}{V}$ and $\M{V}{U}$, a further Markov
kernel can be defined by composing these two and then marginalizing over $V$, i.e. $\M{W}{V} \circ \M{V}{U}: U \to
\mathcal{P}(W)$.
Consider a set of random variables $\bm{x}, \bm{z}_1, \dots, \bm{z}_L$. Using the composition operation, we can construct a hierarchy of Markov kernels or \emph{feature transitions} as
\begin{equation}
  \label{eq:encoder_fmap}
  \M{\bm{z}_L}{\bm{x}} = \M{\bm{z}_{L}}{\bm{z}_{L - 1}} \circ \dots \circ\M{\bm{z}_{1}}{\bm{x}}.
\end{equation}
A desirable property for these feature transitions is to have some form of inverses. Motivated by this, we define the adjoint feature transition as $\Minv{\bm{z}_l}{\bm{z}_{l - 1}} = \M{\bm{z}_{l - 1}}{\bm{z}_l}$. From this, we see that
\begin{equation}
  \label{eq:decoder_fmap}
  \Minv{\bm{z}_L}{\bm{x}} = \M{\bm{x}}{\bm{z}_L} = \M{\bm{z}_{1}}{\bm{z}_{2}} \circ \dots \circ\M{\bm{z}_{L - 1}}{\bm{z}_{L}}.
\end{equation}
This can be interpreted as the generative mechanism of the latent variables given the data being the "inverse" of the data-generating mechanism given the latent variables. 
Let  $q(\bm{x})$ denote the distribution of the data and $p(\bm{z}_L)$ be the
prior on the latent variables. Typically the prior will be a simple distribution, e.g. a
standard Gaussian $p(\bm{z}_L) = \mathcal{N}(\bm{0} \mid \bm{I})$ .


The composition of Markov kernels in \autoref{eq:encoder_fmap}, mapping data samples
$\bm{x}$ to samples of the latent variables $\bm{z}_L$ using $\bm{z}_1, \dots, \bm{z}_{L - 1}$ constitutes the encoder. Similarly, the composition of kernels
in \autoref{eq:decoder_fmap} mapping prior samples of $\bm{z}_L$ to data samples $\bm{x}$
through $\bm{z}_{L - 1}, \dots, \bm{z}_1$ constitutes the decoder.
Thus, the joint distribution of the encoder can be written as 

\begin{equation}
  \label{eq:encoder_joint}
  q(\bm{x}, \dots, \bm{z}_L) = \prod_{l=2}^{L} q(\bm{z}_l \mid \bm{z}_{l -1}) \, q(\bm{z}_1 \mid \bm{x}) \, q(\bm{x}),
\end{equation}
\label{def:encoder}
while the joint distribution of the decoder is given by

\begin{equation}
  \label{eq:decoder_joint}
  p(\bm{x}, \dots, \bm{z}_L) =  p(\bm{x} \mid \bm{z}_1) \, \prod_{l=2}^{L} \, p(\bm{z}_{l - 1} \mid \bm{z}_{l}) \, p(\bm{z}_L).
\end{equation}
\label{def:decoder}

The encoder and decoder distributions can be visualized graphically as
$$
\xymatrix{
\bm{x} \ar@/^/[rr]^{\M{\bm{z}_1}{\bm{x}}} && \ar@/^/[ll]^{\M{\bm{x}}{\bm{z}_1}} \bm{z}_1 \ar@/^/[rr]^{\M{\bm{z}_2}{\bm{z}_1}} && \ar@/^/[ll]^{\M{\bm{z}_1}{\bm{z}_2}} \bm{z}_2 \ar@/^/[rr]^{\M{\bm{z}_3}{\bm{z}_2}} && \ar@/^/[ll]^{\M{\bm{z}_2}{\bm{z}_3}} \dots \ar@/^/[rr]^{\M{\bm{z}_L}{\bm{z}_{L-1}}} && \ar@/^/[ll]^{\M{\bm{z}_{L-1}}{\bm{z}_L}} \bm{z}_L
}
$$
Having constructed the joint distributions of the encoder and decoder, we can now match these distributions through adversarial
training. It can be shown that, under an
ideal (non-parametric) discriminator, this is equivalent to minimizing the Jensen-Shanon divergence between the joint \autoref{eq:encoder_joint} and
\autoref{eq:decoder_joint}, see ~\citep{dumoulin2016adversarially}. Algorithm~\ref{alg:hali} details the training procedure.

\begin{algorithm}[t]
\begin{algorithmic}
    \State $\theta_{g}, \theta_{d} \gets \text{initialize network parameters}$
    \Repeat
        \For{$m \in \{1, \dots, M\}$}
		\State $\bm{\hat{z}}_0^{(m)} \sim q(\bm{x})$
            \Comment{Sample from the dataset}
		\State $\bm{z}_L^{(m)} \sim p(\bm{z})$
            \Comment{Sample from the prior}
        \For{$l \in \{1, \dots, L\}$}
        	\State $\bm{\hat{z}}^{m}_l \sim q(\bm{z}_l \mid \bm{\hat{z}}_{l - 1})$
            \Comment{Sample from each level in the encoder's hierarchy}
        \EndFor
        \For{$l \in \{L \dots 1\}$}
        	\State $\bm{z}^{(m)}_{l - 1} \sim p(\bm{z}_{l - 1} \mid \bm{z}_{l})$
            \Comment{Sample from each level in the decoder's hierarchy}
        \EndFor
            \State $\rho_q^{(m)} \gets D(\bm{\hat{z}}_0^{(m)}, \dots, \bm{\hat{z}}^{(m)}_l)$ 
            \Comment{Get discriminator predictions on encoder's distribution}
            
            \State $\rho_p^{(m)} \gets D(\bm{z}_0^{(m)}, \dots, \bm{z}^{(m)}_l)$ 
            \Comment{Get discriminator predictions on decoder's distribution}
        \EndFor
        \State $\mathcal{L}_d \gets
            -\frac{1}{M} \sum_{i=1}^M \log(\rho_q^{(i)})
            -\frac{1}{M} \sum_{i=1}^M\ log(1 - \rho_p^{(i)})$
            \Comment{Compute discriminator loss}
        \State $\mathcal{L}_g \gets
            -\frac{1}{M} \sum_{i=1}^M \log(1 - \rho_q^{(i)})
            -\frac{1}{M} \sum_{i=1}^M \log(\rho_p^{(i)})$
            \Comment{Compute generator loss}
        \State $\theta_d \gets \theta_d - \nabla_{\theta_d} \mathcal{L}_d$
            \Comment{Gradient update on discriminator network}
        \State $\theta_g \gets \theta_g - \nabla_{\theta_g} \mathcal{L}_g$
            \Comment{Gradient update on generator networks}
    \Until{convergence}
\end{algorithmic}
\caption{\label{alg:hali} HALI training procedure.}
\end{algorithm}
\subsection{A hierarchy of reconstructions} \label{sec:high_recon}
The Markovian character of both the encoder and decoder implies a hierarchy of reconstructions in the decoder. In particular, for a given observation $\bm{x} \sim
p(\bm{x})$, the model yields $L$ different reconstructions
$\hat{\bm{x}}_{l} \sim \M{\bm{x}}{\bm{z}_l} \circ \M{\bm{z}_l}{\bm{x}} $ for
  $l \in \{1, \dots, L\}$ with $\hat{\bm{x}}_l$ the reconstruction of the $\bm{x}$ at the $l$-th level of the hierarchy.
Here, we can think of $\M{\bm{z}_{l}}{\bm{x}}$ as projecting $\bm{x}$ to the $l$-th
intermediate representation and $\M{\bm{x}}{\bm{z}_{l}}$ as projecting it back to the input
space. 
Then, the \emph{reconstruction error} for a given input
$\bm{x}$ at the $l$-th hierarchical level is given by
\begin{equation}
\LL^{l}(\bm{x}) = \EE_{\bm{z}_l \sim
  \M{\bm{z}_l}{\bm{x}}}[-\log(p(\bm{x} \mid \bm{z}_l))].
  \end{equation}
Contrary to models that try to merge autoencoders and adversarial
models, e.g. ~\cite{rosca2017variational, larsen2015autoencoding}, HALI
does not require any additional terms in its loss function in order to minimize the above reconstruction error. 
Indeed, the reconstruction errors at the different levels of HALI are minimized down to the amount of information about $\bm{x}$ that a given level of the hierarchy is able to encode as training proceeds. 
Furthermore, under an optimal discriminator, training in HALI minimizes the Jensen-Shanon divergence between 
  $q(\bm{x}, \bm{z}_1, \dots, \bm{z}_L)$ and $p(\bm{x}, \bm{z}_1, \dots,
  \bm{z}_L)$ as formalized in Proposition~\ref{prop:js_to_rec} below. Furthermore, the interaction between the reconstruction error and training dynamics is captured in Proposition~\ref{prop:js_to_rec}.
  \begin{proposition}
    \label{prop:js_to_rec}
    Assuming $q(\bm{x}, \bm{z}_l) $ is bounded away for zero for all $l \in \{1, \dots, L\}$, we have that
      \begin{equation}
        \EE_{\bm{x} \sim q(\bm{x})}[\mathcal{L}^l(\bm{x})] - H(\bm{x} \mid \bm{z}_l) \leq K\, \JS{p(\bm{x}, \bm{z}_1, \dots, \bm{z}_L)}{q (\bm{x}, \bm{z}_1, \dots, \bm{z}_L)},
      \end{equation}
      where $H(\bm{x} \mid \bm{z}_l)$ is computed under the encoder's distribution and $K$ is as defined in Lemma 2 in the appendix.
    \end{proposition}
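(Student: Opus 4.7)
The plan is to express the left-hand side as a conditional KL divergence between the encoder's and decoder's conditionals, then work upward to the full joint KL, and finally convert that to a JS divergence via the lemma mentioned in the appendix.

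First, I would expand the expectations. By definition,
\[
\EE_{\bm{x} \sim q(\bm{x})}[\mathcal{L}^l(\bm{x})] = \EE_{q(\bm{x}, \bm{z}_l)}[-\log p(\bm{x} \mid \bm{z}_l)],
\]
where $q(\bm{x}, \bm{z}_l)$ is the marginal of the encoder joint. Similarly, since the entropy is taken under the encoder,
\[
H(\bm{x} \mid \bm{z}_l) = \EE_{q(\bm{x}, \bm{z}_l)}[-\log q(\bm{x} \mid \bm{z}_l)].
\]
Subtracting gives
\[
\EE_{\bm{x} \sim q(\bm{x})}[\mathcal{L}^l(\bm{x})] - H(\bm{x} \mid \bm{z}_l) = \EE_{q(\bm{z}_l)}\bigl[\KL{q(\bm{x} \mid \bm{z}_l)}{p(\bm{x} \mid \bm{z}_l)}\bigr],
\]
which is a conditional KL divergence and therefore non-negative.

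Second, I would apply the chain rule for KL divergence,
\[
\KL{q(\bm{x}, \bm{z}_l)}{p(\bm{x}, \bm{z}_l)} = \KL{q(\bm{z}_l)}{p(\bm{z}_l)} + \EE_{q(\bm{z}_l)}\bigl[\KL{q(\bm{x} \mid \bm{z}_l)}{p(\bm{x} \mid \bm{z}_l)}\bigr],
\]
and drop the non-negative marginal term to get that the conditional KL is upper bounded by the joint KL on $(\bm{x}, \bm{z}_l)$. Next, since the marginal $(\bm{x}, \bm{z}_l)$ is obtained from the full joint $(\bm{x}, \bm{z}_1, \dots, \bm{z}_L)$ by integrating out the remaining latent variables, the data processing inequality for KL divergence yields
\[
\KL{q(\bm{x}, \bm{z}_l)}{p(\bm{x}, \bm{z}_l)} \leq \KL{q(\bm{x}, \bm{z}_1, \dots, \bm{z}_L)}{p(\bm{x}, \bm{z}_1, \dots, \bm{z}_L)}.
\]

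Third, I would invoke Lemma 2 from the appendix to convert the KL bound into a JS bound. This is the step I expect to be the main obstacle, since in general $\mathrm{KL}$ cannot be dominated by $\mathrm{JS}$; the inequality requires that the likelihood ratio between $p$ and $q$ be controlled, which is precisely the role of the hypothesis that $q(\bm{x}, \bm{z}_l)$ is bounded away from zero. Under that boundedness hypothesis, Lemma 2 furnishes a constant $K$ such that $\KL{q}{p} \leq K\, \JS{p}{q}$. Combining this with the two inequalities above yields
\[
\EE_{\bm{x} \sim q(\bm{x})}[\mathcal{L}^l(\bm{x})] - H(\bm{x} \mid \bm{z}_l) \leq K\, \JS{p(\bm{x}, \bm{z}_1, \dots, \bm{z}_L)}{q(\bm{x}, \bm{z}_1, \dots, \bm{z}_L)},
\]
which is the desired bound. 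The full argument thus hinges only on a routine identity (conditional KL as a difference of cross-entropy and entropy), the chain rule, data processing, and the technical lemma that handles the KL-to-JS conversion.
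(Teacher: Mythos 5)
Your proposal follows essentially the same route as the paper, and the individual ingredients are all correct: the identity
\begin{equation*}
\EE_{\bm{x} \sim q(\bm{x})}[\mathcal{L}^l(\bm{x})] - H(\bm{x} \mid \bm{z}_l) = \EE_{q(\bm{z}_l)}\bigl[\KL{q(\bm{x} \mid \bm{z}_l)}{p(\bm{x} \mid \bm{z}_l)}\bigr],
\end{equation*}
the chain rule to pass to the joint KL on $(\bm{x},\bm{z}_l)$, the monotonicity of divergences under marginalization, and the KL-to-JS conversion of Lemma 2 are exactly the pieces the paper assembles. The one substantive difference is the order of the last two steps, and it affects where the hypothesis is used. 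You marginalize at the KL level first, passing from $\KL{q(\bm{x},\bm{z}_l)}{p(\bm{x},\bm{z}_l)}$ up to the KL between the full joints, and only then invoke Lemma 2; this forces the boundedness/likelihood-ratio control to hold for the full joints over $(\bm{x},\bm{z}_1,\dots,\bm{z}_L)$, which is not what the proposition assumes --- boundedness of the pairwise marginal $q(\bm{x},\bm{z}_l)$ does not imply boundedness of the full joint --- and it would make $K$ depend on the full joint rather than on $(\bm{x},\bm{z}_l)$ as advertised. The paper does it in the opposite order: it applies Lemma 2 at the level of $q(\bm{x},\bm{z}_l)$ and $p(\bm{x},\bm{z}_l)$, where the stated hypothesis lives, obtaining $K\,\JS{p(\bm{x},\bm{z}_l)}{q(\bm{x},\bm{z}_l)}$, and then uses Lemma 1 (an $f$-divergence between marginals is dominated by the $f$-divergence between the joints) applied to the Jensen--Shannon divergence, which is itself an $f$-divergence, to reach the full joint with the same constant. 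Swapping your last two steps repairs this with no other change. (A smaller point: the KL you actually need to convert is $\KL{q}{p}$, so it is really the ratio $q/p$ that Lemma 2 must control, not $q$ alone; the paper's own statement and proof are equally loose here, so this is a shared imprecision rather than an error specific to your argument.)
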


On the other hand, proposition ~\ref{prop:rec_to_mi_h} below relates the intermediate representations in the hierarchy to the corresponding induced reconstruction error.
   \begin{proposition}
     \label{prop:rec_to_mi_h}
     For any given latent variable $\bm{z}_{l}$,
     \begin{equation}
     \EE_{\bm{x} \sim q_{\bm{x}}}[\EE_{\bm{z} \sim \M{\bm{z}_l}{\bm{x}}}[-\log p(\bm{x}\mid \bm{z}_{l})]] \geq H(\bm{x} \mid \bm{z}_l)
     \end{equation}
     i.e. the reconstruction error is an upper bound on $H(\bm{x} \mid \bm{z}_l)$.
   \end{proposition}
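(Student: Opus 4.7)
The plan is to recognize this as a standard variational-lower-bound/Gibbs-inequality argument. The key observation is that the reconstruction error on the left-hand side is a cross-entropy between the encoder's conditional $q(\bm{x}\mid\bm{z}_l)$ and the decoder's conditional $p(\bm{x}\mid\bm{z}_l)$, averaged over the encoder's marginal on $\bm{z}_l$; cross-entropy always upper-bounds entropy, with the gap being a KL divergence.

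First I would unfold the nested expectation. By the definition of the encoder's Markov kernel $\M{\bm{z}_l}{\bm{x}}$ in \autoref{eq:encoder_fmap} together with the data distribution $q(\bm{x})$, the integrand can be rewritten against the encoder's joint $q(\bm{x},\bm{z}_l)$ obtained by marginalizing \autoref{eq:encoder_joint}:
\begin{equation}
\EE_{\bm{x} \sim q(\bm{x})}\bigl[\EE_{\bm{z}_l \sim \M{\bm{z}_l}{\bm{x}}}[-\log p(\bm{x}\mid \bm{z}_{l})]\bigr] = \EE_{q(\bm{x},\bm{z}_l)}[-\log p(\bm{x}\mid\bm{z}_l)].
\end{equation}

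Next I would add and subtract $\log q(\bm{x}\mid\bm{z}_l)$ inside the expectation, so the right-hand side splits as
\begin{equation}
\EE_{q(\bm{x},\bm{z}_l)}[-\log q(\bm{x}\mid\bm{z}_l)] + \EE_{q(\bm{z}_l)}\bigl[\KL{q(\bm{x}\mid\bm{z}_l)}{p(\bm{x}\mid\bm{z}_l)}\bigr].
\end{equation}
The first summand is exactly $H(\bm{x}\mid\bm{z}_l)$ computed under the encoder, matching the convention stated in the proposition. The second summand is the encoder-marginal average of a KL divergence and is therefore non-negative by Gibbs' inequality. Dropping this non-negative term yields the claimed inequality.

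There is no real obstacle here: the argument is the same identity that underlies the ELBO, namely cross-entropy equals entropy plus KL. The only things to be careful about are (i) that $H(\bm{x}\mid\bm{z}_l)$ is the encoder-side conditional entropy (so the decomposition goes through cleanly), and (ii) that the conditional densities $q(\bm{x}\mid\bm{z}_l)$ and $p(\bm{x}\mid\bm{z}_l)$ are well-defined and mutually absolutely continuous on the support, which is exactly the kind of positivity assumption already invoked for Proposition~\ref{prop:js_to_rec}; under it the KL term is finite and non-negative, completing the argument.
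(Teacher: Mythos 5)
Your proof is correct and is essentially the same argument as the paper's: both reduce the claim to the non-negativity of $\KL{q(\bm{x}\mid\bm{z}_l)}{p(\bm{x}\mid\bm{z}_l)}$ (Gibbs' inequality, i.e.\ cross-entropy dominates entropy) and then average over the encoder's marginal on $\bm{z}_l$. The only cosmetic difference is that you perform the entropy-plus-KL decomposition directly under the joint $q(\bm{x},\bm{z}_l)$, whereas the paper states the pointwise bound first and then integrates via Fubini.
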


In summary, Propositions~\ref{prop:js_to_rec} and ~\ref{prop:rec_to_mi_h} establish the
dynamics between the hierarchical representation learned by the inference network, the reconstruction errors and 
the adversarial matching of the joint distributions \autoref{eq:encoder_joint} and \autoref{eq:decoder_joint}.  The proofs on the two propositions above are deferred to the appendix.
Having  theoretically established the interplay between 
layer-wise reconstructions and the training mechanics, 
we now move to the empirical evaluation of HALI.
   
\section{Empirical Analysis: Setup}
We designed our experiments with the objective of addressing the following questions: Is HALI successful in improving the fidelity perceptual reconstructions?  Does HALI induces a semantically meaningful representation of the observed data? Are the learned representations useful for downstream classification tasks?  
All of these questions are considered in turn in the following sections.
     
We evaluated HALI on four datasets, CIFAR10 \citep{krizhevsky2009learning}, SVHN
\citep{netzer2011reading}, ImageNet 128x128
\citep{russakovsky2015imagenet} and CelebA \citep{liu2015deep}.
We used two conditional hierarchies in all experiments 
with the Markov kernels parametrized by conditional isotropic
Gaussians. For SVHN, CIFAR10 and CelebA the resolutions of two level latent variables
are $\bm{z}_1 \in \mathbb{R}^{64\times16\times 16}$ and $\bm{z}_2 \in \mathbb{R}^{256}$. 
For ImageNet, the resolutions is $\bm{z}_1 \in \mathbb{R}^{64\times32\times 32}$ and $\bm{z}_2 \in \mathbb{R}^{256}$.

For both the encoder and decoder, we use residual blocks\citep{HeZRS15} with skip connections between the blocks in conjunction with batch normalization\citep{IoffeS15}.
We use convolution with stride 2 for downsampling in the encoder and bilinear upsampling in the decoder. In the discriminator, we use consecutive
stride 1 and stride 2 convolutions and weight normalization \citep{salimans2016weight}. To
regularize the discriminator, we apply dropout every 3 layers with a
probability of retention of 0.2. We also add Gaussian noise with
standard deviation of 0.2 at the inputs of the discriminator and the encoder.

\section{Empirical Analysis I: Reconstructions}\label{subsec:exp_rec}

One of the desired objectives of a generative model is to reconstruct the input 
images from the latent representation. We show that HALI offers improved perceptual reconstructions relative to the (non-hierarchical) ALI model.

\label{exp:recons}
\begin{figure}
\begin{minipage}{\textwidth}
  \begin{subfigure}[t]{0.5\textwidth}
    \centering
    \includegraphics[width=6.8cm]{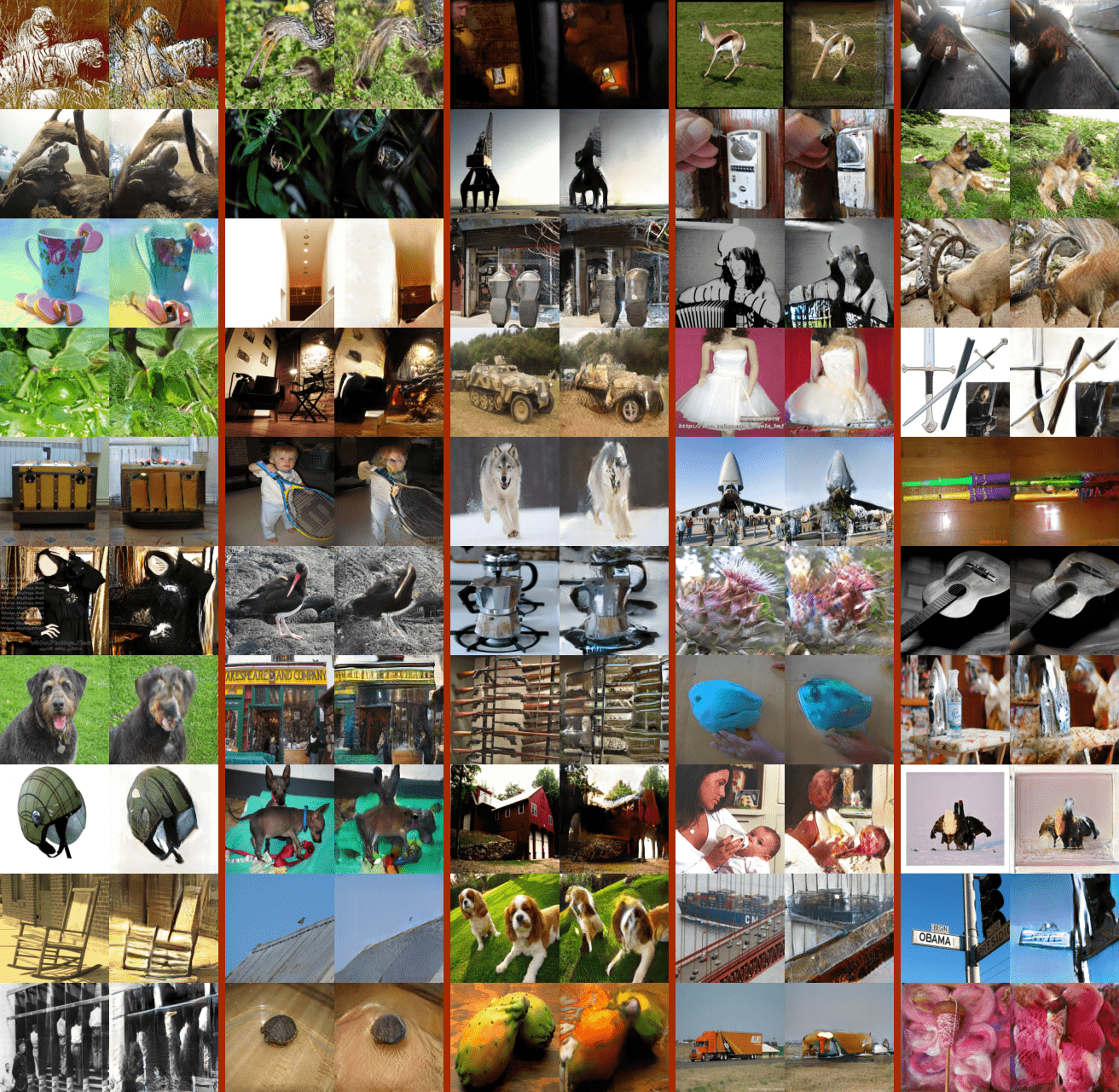}
    \caption{ImageNet128 from $\bm{z}_1$}
   \end{subfigure}
   \begin{subfigure}[t]{0.5\textwidth}
    \centering
    \includegraphics[width=6.8
    cm]{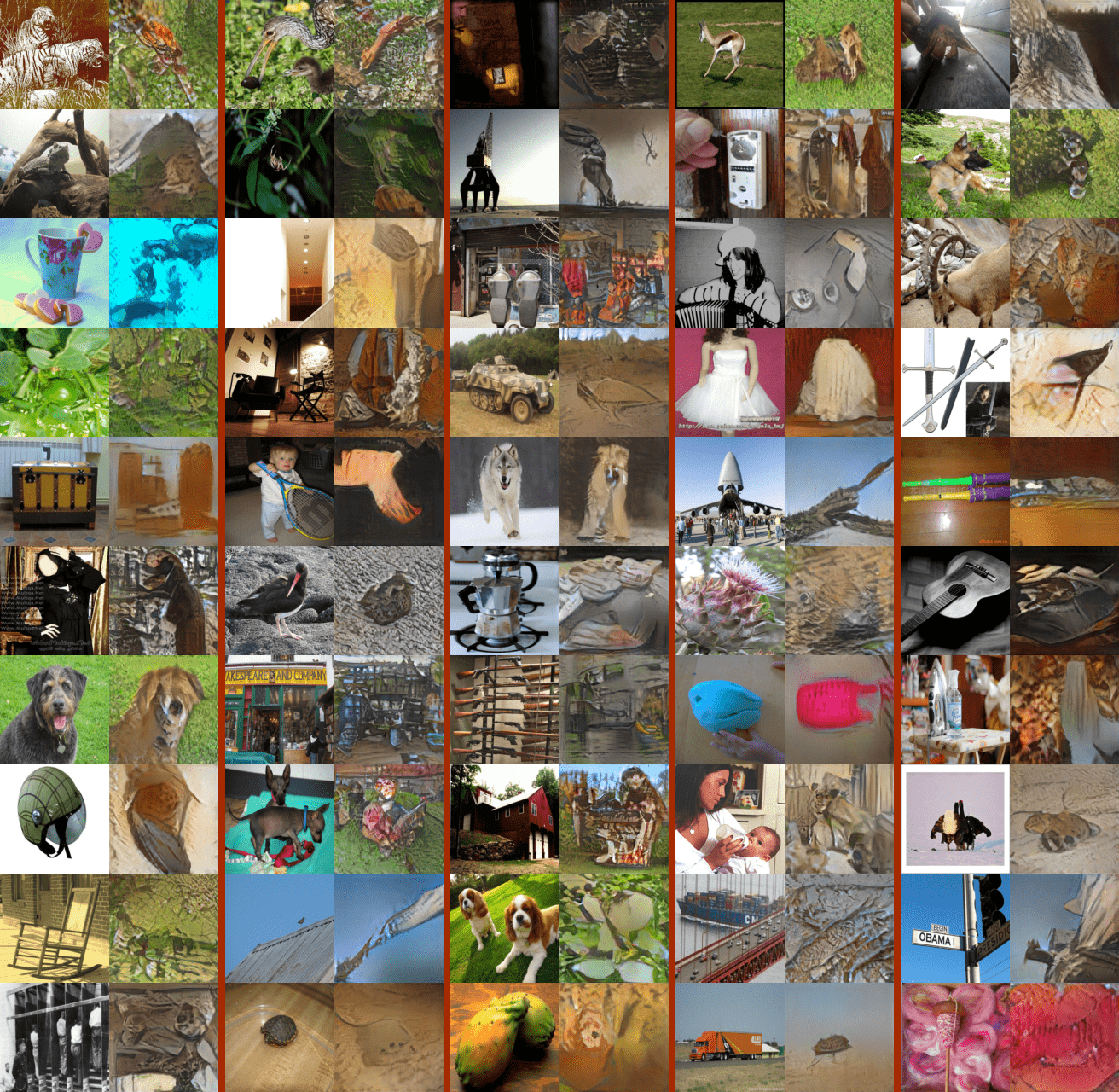}
    \caption{ImageNet128 from $\bm{z}_2$}
  \end{subfigure}
\end{minipage}%
  \caption{ImageNet128 reconstructions from $\bm{z}_1$ and $\bm{z}_2$. 
    Odd columns corresponds to examples from the validation set while even columns are 
    the model's reconstructions} \label{fig:recons}
\end{figure}

\subsection{Qualitative analysis}
First, we present reconstructions obtained on ImageNet. Reconstructions from SVHN and CIFAR10 can be seen in \autoref{fig:recons_appendix} in the appendix.
\autoref{fig:recons} highlights HALI's ability to reconstruct the input samples with high fidelity. We observe that reconstructions from the first level 
of the hierarchy exhibit local differences in the natural images, while reconstructions 
from the second level of the hierarchy displays global change. Higher conditional 
reconstructions are more often than not reconstructed as a different member of the same class.  
Moreover, we show in \autoref{fig:samples} that this increase in reconstruction fidelity 
does not impact the quality of the generative samples from HALI's decoder.

\label{exp:samples}
\begin{figure}
  \begin{subfigure}[t]{0.5\textwidth}
    \centering
    \includegraphics[width=6.8cm]{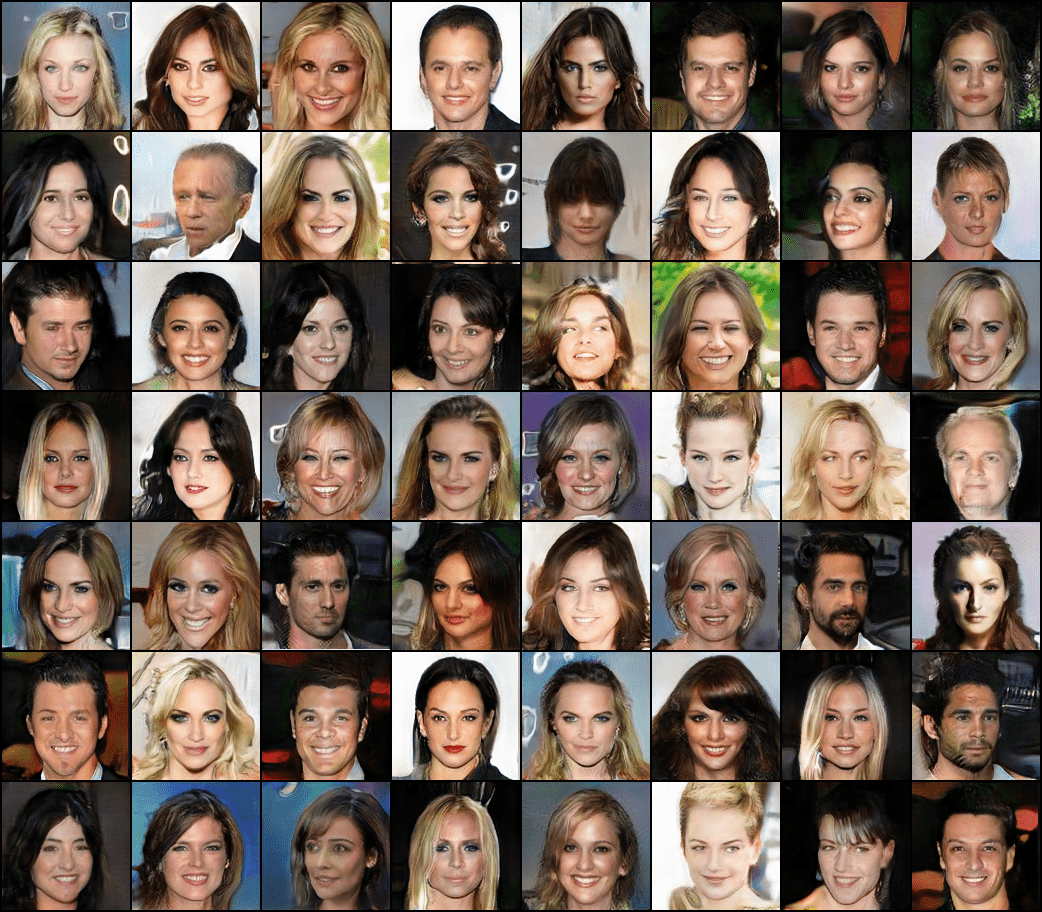}
    \caption{CelebA}
  \end{subfigure}
  \hfill
  \begin{subfigure}[t]{0.5\textwidth}
    \centering
    \includegraphics[width=6.8cm]{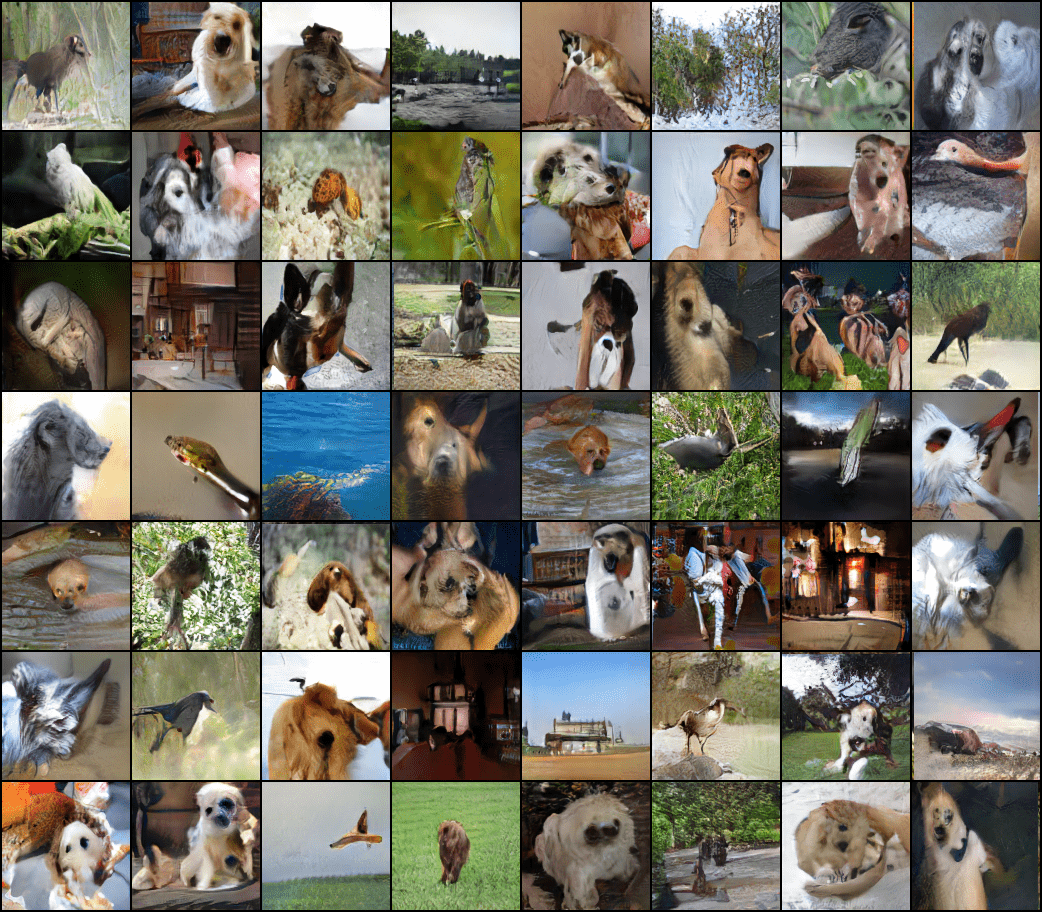}
    \caption{ImageNet128}
  \end{subfigure}
  \caption{Samples from $128 \times 128$ CelebA and ImageNet128 datasets}
  \label{fig:samples}
\end{figure}

\subsection{Quantitative analysis}
We further investigate the quality of the reconstructions with a quantitative assessment of the preservation of perceptual features in the input sample. For 
this evaluation task, we use the CelebA dataset where each image comes with a 
40 dimensional binary attributes vector. A VGG-16 classifier\citep{SimonyanZ14a} 
was trained on the CelebA training set to classify the individual attributes. This 
trained model is then used to classify the attributes of the reconstructions from
the validation set. We consider a reconstruction as being good if it preserves -- as measured by the trained classifier -- the attributes possessed by the original sample. 

We report a summary of the statistics of the classifier's accuracies 
in \autoref{table:celeba_accuracies_summary}.  
We do this for three different models, VAE,
ALI and HALI. An inspection of the table reveals that 
the proportion of attributes where HALI's reconstructions outperforms the
other models is clearly dominant.
Therefore, the encoder-decoder relationship of HALI better preserves the identifiable 
attributes compared to other models leveraging such relationships. Please refer to
\autoref{table:celeba_accuracies} in the appendix for the full table of attributes score.

\begin{table}[]
    \small
    \centering
    \begin{tabular}{l|c|c|c}
    \hline
                    & \textbf{Mean} & \textbf{Std} & \textbf{\# Best} \\
    \hline
    Data            & $77.13$ & $12.48$ & \ \\
    VAE             & $81.28$ & $10.50$ & $5$ \\
    ALI             & $84.60$ & $5.73$ & $3$ \\
    HALI $\bm{z}_1$ & $91.35$ & $5.62$ & $27$ \\
    HALI $\bm{z}_2$ & $86.28$ & $5.64$ & $3$ \\
    \hline
    \end{tabular}
    \caption{Summary of CelebA attributes classification from reconstructions for
        VAE, ALI and the two levels of HALI. The data row is the summary for
        the VGG classifier and the other scores have been normalized by it. 
        Mean and standard deviation are expressed as percentages. \# best represents
        the count of when a model has the best score on a single attribute.
        Note that it does not sum to 40 as there were ties.}
    \label{table:celeba_accuracies_summary}
\end{table}

\subsection{Perceptual Reconstructions}
In the same spirit as~\cite{larsen2015autoencoding}, 
we construct a metric by computing the Euclidean
distance between the input images and their various reconstructions in the
discriminator's feature space. More precisely, let $\cdot \mapsto \bar{D}(\cdot)$ be the embedding of the
input to the pen-ultimate layer of the discriminator.
We compute the \emph{discriminator embedded distance}
\begin{equation}
	\label{eq:disc_embbed}
  d_{c}(\bm{u}, \bm{v}) = \norm{\bar{D}(\bm{u}, \bm{\hat{u}_1}, \bm{\hat{u}_2}) - \bar{D}(\bm{v}, \bm{\hat{v}}_1, \bm{\hat{v}_2})}_{2},
\end{equation}
where $\cdot \mapsto \norm{\cdot}_2$ is the Euclidean norm. 
We then compute the average distances
$d_{c}(\bm{x}, \hat{\bm{x}}_1)$ and $d_{c}(\bm{x}, \hat{\bm{x}}_2)$ over the ImageNet 
validation set. \autoref{fig:subim1} shows that under $d_{c}$, the average
reconstruction errors for both $\hat{\bm{x}}_1$ and $\hat{\bm{x}}_2$
decrease steadily as training advances. Furthermore, the reconstruction error under $d_c$ of the reconstructions from the first level of the hierarchy are uniformly bounded by above by those of the second.
We note that while the VAEGAN model
of~\cite{larsen2015autoencoding} explicitly minimizes the perceptual
reconstruction error by adding this term to their loss function, HALI implicitly
minimizes it during adversarial training, as shown in \autoref{sec:high_recon}.

\begin{figure}[h]
\begin{subfigure}{0.49\textwidth}
  \includegraphics[width=0.99\linewidth, height=4cm]{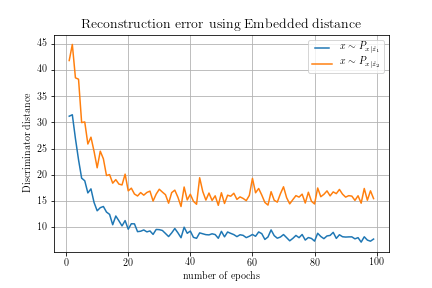}
  
\caption{}
\label{fig:subim1}
\end{subfigure}
\begin{subfigure}{0.49\textwidth}
\includegraphics[width=0.99\linewidth, height=4cm]{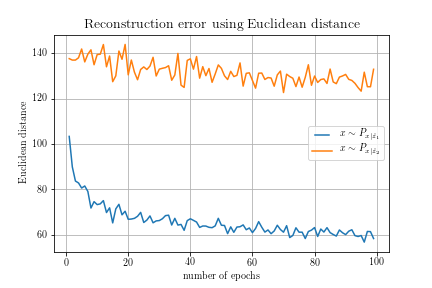}

\caption{}
\label{fig:imagenet}
\end{subfigure}
\caption{\small Comparison of average reconstruction error over the validation
    set for each level of reconstructions using the Euclidean (a) and
    discriminator embedded (b) distances. Using both distances, reconstructions
    errors for $\bm{x} \sim \M{\bm{x}}{\bm{z}_1}$ are uniformly below those for
    $\bm{x} \sim \M{\bm{x}}{\bm{z}_2}$. The reconstruction error using the
    Euclidean distance eventually stalls showing that the Euclidean metric
    poorly approximates the manifold of natural images.
}
\end{figure}

\section{Empirical Analysis II: Learned Representations}\label{subsec:exp_lr}
We now move on to assessing the quality of our learned representation through inpainting,
visualizing the hierarchy and innovation vectors. 

\subsection{Inpainting}
Inpainting is the task of reconstructing the missing or lost parts of an image. It is a challenging task since sufficient prior information is needed to meaningfully replace the 
missing parts of an image. 
While it is common to incorporate inpainting-specific training \cite{YehCLHD16,Perez,PathakKDDE16}, in our case we simply use the standard HALI adversarial loss during training and reconstruct incomplete images during inference time.

\begin{figure}[H]
  \begin{subfigure}[t]{0.33\textwidth}
    \centering
    \includegraphics[width=4.6cm,height=3cm]{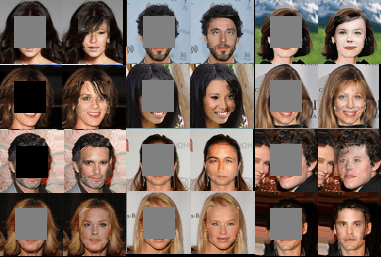}
  \end{subfigure}
  \hfill
  \begin{subfigure}[t]{0.33\textwidth}
    \centering
    \includegraphics[width=4.2cm,height=3cm]{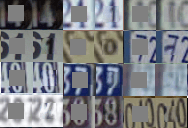}
  \end{subfigure}
  \hfill
  \begin{subfigure}[t]{0.24\textwidth}
    \centering
    \includegraphics[width=3.2cm,height=3cm]{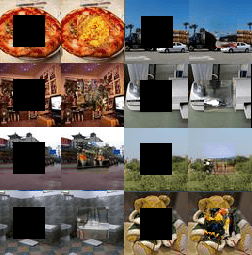}
  \end{subfigure}
  \caption{\small Inpainting on center cropped images on CelebA, SVHN and MS-COCO datasets
           (left to right).}
  \label{fig:inpaintings}
\end{figure}

We first predict the missing portions from the higher level reconstructions 
followed by iteratively using the lower level reconstructions that are pixel-wise closer to the original image. \autoref{fig:inpaintings} shows the inpaintings on 
center-cropped SVHN, CelebA and MS-COCO~\citep{lin2014microsoft} datasets without any 
blending post-processing or explicit supervision. The effectiveness of our model at this task is due the hierarchy -- we can extract semantically consistent reconstructions from the higher levels of the hierarchy, then leverage
pixel-wise reconstructions from the lower levels.

\begin{figure}[h]

\centering
\includegraphics[width=13.2cm,height=0.90cm]{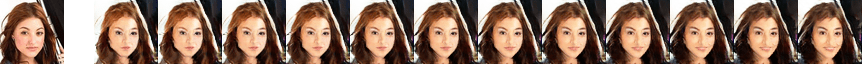}\\\vfill
\includegraphics[width=13.2cm,height=0.92cm]{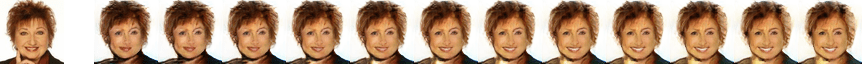}\\\vfill
\includegraphics[width=13.2cm,height=0.86cm]{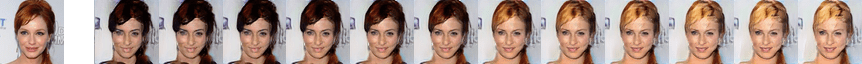}\\\vfill
\includegraphics[width=13.2cm,height=0.88cm]{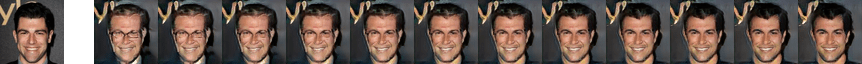}

\caption{\small Real CelebA faces (right) and their corresponding innovation tensor (IT) 
    updates (left). For instance, the third row in the figure features Christina Hendricks followed 
    by hair-color IT updates. Similarly, the first two rows depicts usage 
    of smile-IT and the 4th row glasses-plus-hair-color-IT.}
\label{fig:innovation}
\end{figure}

\subsection{Hierarchical latent representations}
To qualitatively show that higher levels of the hierarchy encode
increasingly abstract representation of the data,
we individually vary the latent variables and observe the effect.

The process is as follow: we sample a latent code from 
the prior distribution $\bm{z}_2$. We then multiply individual components of the
vector by scalars ranging from $-3$ to $3$. 
For $\bm{z}_1$, we fix $\bm{z}_2$ and multiply each feature map independently by scalars
ranging from $-3$ to $3$.
In all cases these modified latent vectors are then decoded back to input data space.
\autoref{fig:z_repr} (a) and (b) exhibit some of those decodings for $\bm{z}_2$,
while (c) and (d) do the same for the lower conditional $\bm{z}_1$. 
The last column contain the
decodings obtained from the originally sampled latent codes. We see that the representations
learned in the $\bm{z}_2$ conditional are responsible for high level
variations like gender, 
while $\bm{z}_1$ codes imply local/pixel-wise changes such as saturation or lip color.

\begin{figure}[H]
  \begin{subfigure}[t]{0.49\textwidth}
    \centering
    \includegraphics[width=\textwidth]{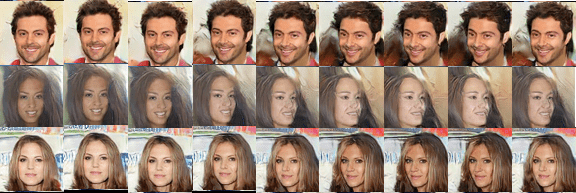}
    \caption{CelebA orientation variation}
  \end{subfigure}
  \hfill
  \begin{subfigure}[t]{0.49\textwidth}
    \centering
    \includegraphics[width=\textwidth]{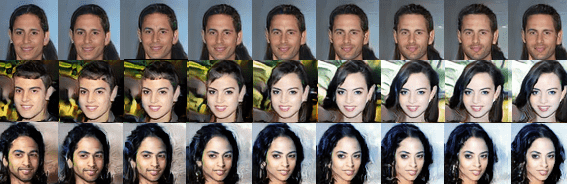}
    \caption{CelebA gender variation}
  \end{subfigure}
  \vfill
 \medskip
  \begin{subfigure}[t]{0.49\textwidth}
    
    \includegraphics[width=\textwidth,height=2.3cm]{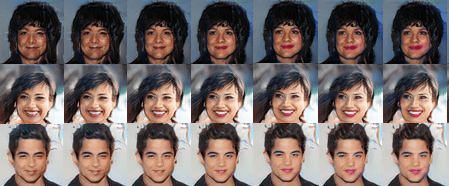}
    \caption{CelebA lipstick feature map variation}
  \end{subfigure}
   \hfill
  \begin{subfigure}[t]{0.49\textwidth}\centering
    \includegraphics[width=\textwidth,height=2.3cm]{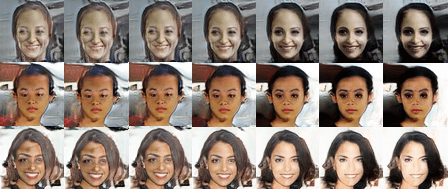}
 
    \caption{CelebA saturation feature map variation}
  \end{subfigure}
  \caption{\small (a) and (b) showcase $\bm{z}_2$ vector variation. We sample a set of $\bm{z}_2$ 
  vectors from the prior. We repeatedly replace a single relevant entry in each vector 
  by a scalar ranging from $-3$ to $3$ and decode. (c) and (d) follows the 
  same process using the $\bm{z}_1$ latent space.}
\label{fig:z_repr}
\end{figure}

\subsection{Latent semantic Innovation}
With HALI, we can exploit the jointly learned hierarchical inference
mechanism to modify actual data samples by manipulating their latent codes. We refer to these sorts of manipulations as latent semantic innovations. 

Consider a given instance from a dataset $\bm{x} \sim q(\bm{x})$. Encoding $\bm{x}$ yields 
$\hat{\bm{z}}_1$ and $\hat{\bm{z}}_2$. We modify $\hat{\bm{z}}_2$ by multiplying
a specific entry by a scalar $\alpha$. We denote the resulting vector by
$\hat{\bm{z}}_2^{\alpha}$. We decode the latter and get $\tilde{z}_1^{\alpha} \sim
\M{\bm{z}_1}{\bm{z}_2}$. We decode the unmodified encoding vector and get
$\tilde{\bm{z}}_1 \sim \M{\bm{z}_1}{\hat{\bm{z}}_2}$.
We then form the \emph{innovation tensor} $\eta^{\alpha} =
\tilde{\bm{z}}_1 - \tilde{\bm{z}}_1^{\alpha}$. Finally, we subtract the innovation
vector from the initial encoding, thus getting $\hat{\bm{z}}_1^{\alpha} =
\hat{\bm{z}}_1 - \eta^{\alpha}$, and sample $\tilde{\bm{x}}^{\alpha} \sim
\M{\bm{x}}{\hat{\bm{z}}^{\alpha}_1}$. 
This method provides explicit control and allows us to carry out 
these variations on real samples in a completely unsupervised way. The results are shown in 
\autoref{fig:innovation}. These were done on the CelebA validation set and were not used for training.

\section{Empirical Evaluations III: Learning Predictive Representations}\label{subsec:exp_tr}
We evaluate the usefulness of our learned representation for downstream tasks by quantifying the performance of HALI on attribute classification in CelebA and on a semi-supervised variant of the MNIST digit classification task.
\subsection{Unsupervised classification}
Following the protocol established by
~\cite{berg2013poof,liu2015deep},
we train 40 linear SVMs on HALI encoder representations (i.e. we utilize the inference network) on the CelebA validation set and subsequently measure performance on the test set.
As in ~\cite{berg2013poof,huang2016learning,kalayeh2017improving}, we report the \emph{balanced accuracy} in order to evaluate the attribute prediction performance. 
We emphasize that, for this experiment, the HALI encoder and decoder were trained in on entirely unsupervised data. Attribute labels were only used to train the linear SVM classifiers.

A summary of the results are reported in \autoref{table:att_acc_pred_summary}. 
HALI's unsupervised features 
surpass those of VAE and ALI, but more remarkably, they outperform the
best handcrafted features by a wide margin~\citep{zhang2014panda}. Furthermore, our approach outperforms a
number of supervised ~\citep{huang2016learning} and deeply supervised~\citep{liu2015deep}
features. \autoref{table:att_acc_pred} in the appendix arrays the results per attribute.


\begin{table}[]
    \small
    \centering
    \begin{tabular}{l|c|c|c}
    \hline
                & \textbf{Mean} & \textbf{Std} & \textbf{\# Best} \\
    \hline
    Triplet-kNN~\citep{schroff2015facenet} & $71.55$ & $12.61$ & $0$ \\
    PANDA~\citep{zhang2014panda}       & $76.95$ & $13.33$ & $0$ \\
    Anet~\citep{liu2015deep}        & $79.56$ & $12.17$ & $0$ \\
    LMLE-kNN~\citep{huang2016learning}    & $83.83$ & $12.33$ & $22$ \\
    VAE         & $73.30$ & $9.65$  & $0$ \\
    ALI         & $73.88$ & $10.16$ & $0$ \\
    HALI        & $83.75$ & $8.96$  & $15$ \\
    \hline
    \end{tabular}
    \caption{Summary of statistics for CelebA attributes mean per-class balanced accuracy
        given in percentage points. \# best represents
        the count of when a model has the best score on a single attribute.
        Note that it does not sum to 40 as there were ties.}
    \label{table:att_acc_pred_summary}
\end{table}

\subsection{Semi-supervised learning within HALI}
The HALI hierarchy can also be used in a more integrated semi-supervised setting, where the encoder also receives a training signal from the supervised objective. 
The currently most successful approach to semi-supervised in adversarially
trained generative models are built on the approach introduced
by~\cite{Salimans2016gan}. This formalism relies on exploiting the
discriminator's feature to differentiate between the individual classes present
in the labeled data as well as the generated samples.
Taking inspiration from~\citep{makhzani2015adversarial,makhzani2017pixelgan}, we adopt a different approach that leverages the Markovian hierarchical
inference network made available by HALI,

\begin{equation}
  \bm{x} \rightarrow \bm{z} \rightarrow \bm{y},
\end{equation}

Where $\bm{z} = enc(\bm{x} + \sigma \, \bm{\epsilon})$, with $\bm{\epsilon} \sim \mathcal{N}(0, \bm{I})$,
and $\bm{y}$ is a categorical random variable. In practice, we characterize the
conditional distribution of $\bm{y}$ given $\bm{z}$ by a softmax. The cost of the
generator is then augmented by a supervised cost. Let us write
$\mathcal{D}_{sup}$ as the set of pairs all labeled instance along with their
label, the supervised cost reads

\begin{equation}
  \mathcal{L}_{sup} = \frac{1}{\mid \mathcal{D}_{sup} \mid} \sum_{\substack{(\bm{y}, \bm{x}) \in \mathcal{D}_{sup}\\ \hat{\bm{y}} \sim q(\bm{y} \mid \bm{x})}} \sum_{k} \bm{y}_k \log(\hat{\bm{y}}_k).
\end{equation}

We showcased this approach on a semi-supervised variant of MNIST\citep{lecun1998mnist} digit classification task with 100 labeled examples evenly distributed across classes.

\autoref{table:semi} shows that HALI achieves a new state-of-the-art result for 
this setting. Note
that unlike ~\cite{dai2017good}, HALI uses no additional regularization.

\begin{table}[]
	\small
    \centering
    \begin{tabular}{l|l}
    \hline
                                                          & MNIST (\# errors) \\ \hline
    VAE (M1+M2)~\citep{kingma2014semi}                     & $233 \pm 14$        \\
    VAT~\citep{miyato2017virtual}                          & $136$             \\
    CatGAN~\citep{catgan2015}                              & $191 \pm 10$        \\
    Adversarial Autoencoder~\citep{makhzani2015adversarial}& $190 \pm 10$        \\
    PixelGAN~\citep{makhzani2017pixelgan}                  & $108 \pm 15$        \\
    ADGM~\citep{maaloe2016auxiliary}                       & $96 \pm 2$          \\
    Feature-Matching GAN~\citep{Salimans2016gan}           & $93 \pm 6.5$        \\
    Triple GAN~\citep{li2017triple}                        & $91 \pm 58$         \\
    GSSLTRABG~\citep{dai2017good}  & $79.5 \pm 9.8$      \\ \hline
    HALI (ours)                                           & $\textbf{73}$                \\ \hline
    \end{tabular}
    \caption{\small Comparison on semi-supervised learning with state-of-the-art methods on 
        MNIST with 100 labels instance per class. Only methods without data
        augmentation are included.}
    \label{table:semi}
\end{table}

\section{Conclusion and future work}
\label{conclusion}
In this paper, we introduced HALI, a novel adversarially trained generative model. HALI learns a hierarchy of latent variables with a simple Markovian structure in both the generator and inference networks. 
We have shown both theoretically and empirically the advantages gained by 
extending the ALI framework to a hierarchy. 

While there are many potential applications of HALI, one important future direction of research is to explore ways to render the training process more stable and straightforward. GANs are well-known to be challenging to train and the introduction of a hierarchy of latent variables only adds to this. 

\bibliography{iclr2018_conference}
\bibliographystyle{iclr2018_conference}
\clearpage
\appendix
\section{Appendix}

\subsection{Architecture Details}
\begin{table}[H]
\centering
\begin{tabular}{@{}rllllll@{}} 
Operation              & Kernel       & Strides      & Feature maps & BN/WN?      & Dropout & Nonlinearity \\ \midrule
$G_{z_1}(x)$ -- $3 \times 128 \times 128$ input                                                             \\
Convolution            & $3 \times 3$ & $1 \times 1$ & $32$         & $\times$  & 0.0     & Leaky ReLU \\
Convolution            & $3 \times 3$ & $2 \times 2$ & $64$         & BN  & 0.0     & Leaky ReLU \\
Resnet Block           & $3 \times 3$ & $1 \times 1$ & $64$        & BN  & 0.0     & Leaky ReLU \\
Resnet Block           & $3 \times 3$ & $1 \times 1$ & $64$        & BN  & 0.0     & Leaky ReLU \\
Convolution            & $3 \times 3$ & $2 \times 2$ & $128$        & BN & 0.0     & Leaky ReLU \\
Resnet Block            & $3 \times 3$ & $1 \times 1$ & $128$        & BN  & 0.0     & Leaky ReLU \\
Convolution            & $3 \times 3$ & $1 \times 1$ & $128$        & $\times$ & 0.0     & Leaky ReLU     \\
 Gaussian Layer \\
$G_{z_2}(z_1)$ -- $64 \times 32 \times 32$ input                                                              \\
Convolution            & $3 \times 3$ & $2 \times 2$ & $256$         & BN  & 0.0     & Leaky ReLU \\
Convolution            & $3 \times 3$ & $2 \times 2$ & $256$         & BN  & 0.0     & Leaky ReLU \\
Convolution           & $3 \times 3$ & $2 \times 2$ & $512$        & BN  & 0.0     & Leaky ReLU \\
Resnet Block            & $3 \times 3$ & $1 \times 1$ & $512$        & BN  & 0.0     & Leaky ReLU \\
Convolution             & $4 \times 4$ & $valid$ & $512$        & BN  & 0.0     & Leaky ReLU \\
Convolution            & $1 \times 1$ & $1 \times 1$ & $512$        & $\times$ & 0.0     & Linear     \\ \midrule
$G_{z_1}(z_2)$ -- $128 \times 1 \times 1$ input                                                              \\
Convolution            & $1 \times 1$ & $1 \times 1$ & $4 * 4 * 256$         & BN  & 0.0     & Leaky ReLU 		\\
Bilinear Upsampling \\
Resnet Block             & $3 \times 3$ & $1 \times 1$ & $256$         & BN & 0.0     & Leaky ReLU \\
Bilinear Upsampling \\
Convolution           & $3 \times 3$ & $1 \times 1$ & $256$        & BN  & 0.0     & Leaky ReLU \\
Convolution            & $3 \times 3$ & $1 \times 1$ & $128$        & BN  & 0.0     & Leaky ReLU \\
Bilinear Upsampling \\
Convolution             & $3 \times 3$ & $1 \times 1$ & $128$        & $\times$  & 0.0     & Leaky ReLU \\
 Gaussian Layer \\
$G_{x}(z_1)$ -- $64 \times 32 \times 32$ input                                                              \\
Convolution            & $1 \times 1$ & $1 \times 1$ & $64$         & BN  & 0.0     & Leaky ReLU 		\\

Resnet Block             & $3 \times 3$ & $1 \times 1$ & $64$         & BN  & 0.0     & Leaky ReLU \\
Bilinear Upsampling \\
Resnet Block             & $3 \times 3$ & $1 \times 1$ & $64$         & BN  & 0.0     & Leaky ReLU \\
Bilinear Upsampling \\
Convolution           & $3 \times 3$ & $1 \times 1$ & $64$        & BN  & 0.0     & Leaky ReLU \\
Convolution            & $3 \times 3$ & $1 \times 1$ & $32$        & BN  & 0.0     & Leaky ReLU \\

Convolution            & $1 \times 1$ & $1 \times 1$ & $3$        & $\times$  & 0.0     & Tanh \\ \midrule

$D(x)$ -- $3 \times 128 \times 128$ input                                                               \\
Convolution            & $3 \times 3$ & $1 \times 1$ & $32$         & WN & 0.2     & Leaky ReLU     \\
Convolution            & $3 \times 3$ & $2 \times 2$ & $64$         & WN & 0.5     & Leaky ReLU     \\
Convolution            & $3 \times 3$ & $2 \times 2$ & $64$        & WN & 0.5     & Leaky ReLU     \\
Convolution            & $3 \times 3$ & $1 \times 1$ & $64$        & WN & 0.5     & Leaky ReLU     \\
$D(x, z_1)$ -- $128 \times 32 \times 32$ input                                                           \\
\multicolumn{7}{@{}c@{}}{\em Concatenate $D(x)$ and $z_1$ along the channel axis}
	\\
Convolution            & $3 \times 3$ & $1 \times 1$ & $128$         & WN & 0.2     & Leaky ReLU     \\
Convolution            & $3 \times 3$ & $2 \times 2$ & $256$         & WN & 0.5     & Leaky ReLU     \\
Convolution            & $3 \times 3$ & $2 \times 2$ & $256$        & WN & 0.5     & Leaky ReLU     \\
Convolution            & $3 \times 3$ & $2 \times 2$ & $512$        & WN & 0.5     & Leaky ReLU     \\
Convolution            & $4 \times 4$ & $valid$ & $512$         & WN & 0.2     & Leaky ReLU     \\
$D(x, z_1,z_2)$ -- $512 \times 1 \times 1$ input                                                           \\
\multicolumn{7}{@{}c@{}}{\em Concatenate $D(x,z_1)$ and $z_2$ along the channel axis}
	\\

Convolution            & $1 \times 1$ & $1 \times 1$ & $1024$       & $\times$ & 0.5     & Leaky ReLU     \\
Convolution            & $1 \times 1$ & $1 \times 1$ & $1024$       & $\times$ & 0.5     & Leaky ReLU     \\
Convolution            & $1 \times 1$ & $1 \times 1$ & $1$          & $\times$ & 0.5     & Sigmoid    \\ \midrule
\end{tabular}
\vspace{0.2cm} 
\caption{\label{tab:imagenet_celebA_arch_description} Architecture detail for HALI(unsupervised) on the Imagenet 128 and CelebA 128 Datasets.}

\end{table}

\subsection{Proofs}
\begin{lemma}
\label{lem:joint_f_bound}
Let $f$ be a valid f-divergence generator.
Let $p$ and $q$ be joint distributions over a random vector $\bm{x}$. Let  $\bm{x}_A$ be any strict subset of $\bm{x}$ and $\bm{x}_{-A}$ its complement, then
\begin{equation}
\FD{p(\bm{x})}{q(\bm{x})} \geq \FD{p(\bm{x}_A)}{q(\bm{x}_A)}
\end{equation}
\end{lemma}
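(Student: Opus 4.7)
The plan is to prove this via the monotonicity of $f$-divergences under marginalization, which is a direct consequence of Jensen's inequality applied to the convex generator $f$. Intuitively, marginalizing out $\bm{x}_{-A}$ is a (deterministic) coarse-graining of the probability space, and $f$-divergences can only decrease when information is discarded.

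Concretely, first I would write each marginal as an integral, $p(\bm{x}_A) = \int p(\bm{x}_A, \bm{x}_{-A})\,d\bm{x}_{-A}$ and similarly for $q$. The key algebraic manipulation is to express the marginal likelihood ratio as a conditional expectation under $q$:
\begin{equation}
\frac{p(\bm{x}_A)}{q(\bm{x}_A)} = \int \frac{p(\bm{x}_A, \bm{x}_{-A})}{q(\bm{x}_A, \bm{x}_{-A})}\, q(\bm{x}_{-A}\mid \bm{x}_A)\, d\bm{x}_{-A} = \EE_{q(\bm{x}_{-A}\mid \bm{x}_A)}\!\left[\frac{p(\bm{x}_A, \bm{x}_{-A})}{q(\bm{x}_A, \bm{x}_{-A})}\right].
\end{equation}

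Next, I would apply Jensen's inequality: since $f$ is convex by assumption (being a valid $f$-divergence generator), $f(\EE[Y]) \leq \EE[f(Y)]$. Applying this to the above conditional expectation gives
\begin{equation}
f\!\left(\frac{p(\bm{x}_A)}{q(\bm{x}_A)}\right) \leq \EE_{q(\bm{x}_{-A}\mid \bm{x}_A)}\!\left[f\!\left(\frac{p(\bm{x}_A, \bm{x}_{-A})}{q(\bm{x}_A, \bm{x}_{-A})}\right)\right].
\end{equation}
Multiplying both sides by $q(\bm{x}_A)$ and integrating over $\bm{x}_A$, the right-hand side becomes $\int q(\bm{x}_A, \bm{x}_{-A})\, f\!\left(p(\bm{x}_A, \bm{x}_{-A})/q(\bm{x}_A, \bm{x}_{-A})\right)\, d\bm{x}_A\, d\bm{x}_{-A} = \FD{p(\bm{x})}{q(\bm{x})}$ by the definition of conditional density and Fubini, while the left-hand side is exactly $\FD{p(\bm{x}_A)}{q(\bm{x}_A)}$.

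The main obstacle is bookkeeping around measure-theoretic edge cases, namely the set where $q(\bm{x}_A) = 0$ (hence $q(\bm{x}_A, \bm{x}_{-A}) = 0$ on a corresponding set) and the usual $f$-divergence convention $0 \cdot f(0/0) = 0$ and $0 \cdot f(a/0) = a\,\lim_{t\to\infty} f(t)/t$. These are handled by standard absolute-continuity arguments: if $p \ll q$ then the integrals are well-defined, and otherwise both sides are consistent under the convention (with the inequality still holding since the right-hand side then contains the singular contribution which dominates). Apart from this, the proof is essentially a one-line application of Jensen, and no further machinery is needed.
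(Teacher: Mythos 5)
Your proof is correct and follows essentially the same route as the paper's: both rest on the identity $p(\bm{x}_A)/q(\bm{x}_A) = \EE_{q(\bm{x}_{-A}\mid\bm{x}_A)}\bigl[p(\bm{x})/q(\bm{x})\bigr]$ followed by Jensen's inequality applied to the convex generator $f$, with the only cosmetic difference being that you bound the marginal divergence upward while the paper bounds the joint divergence downward. Your additional remarks on the $q=0$ edge cases and the $f$-divergence conventions are careful extras that the paper omits, but they do not change the argument.
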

\begin{proof}
By definition, we have
\begin{equation*}
\FD{p(\bm{x})}{q(\bm{x})} = \EE_{\bm{x} \sim q(\bm{x})}[f(\frac{p(\bm{x})}{q(\bm{x})})] = \EE_{\bm{x}_A \sim q(\bm{x}_A)} \EE_{\bm{x}_{-A} \sim q(\bm{x}_{-A} \mid \bm{x}_A)}[f(\frac{p(\bm{x}_A) \, p(\bm{x}_{-A} \mid \bm{x}_A)}{q(\bm{x}_A) \, 	q(\bm{x}_{-A} \mid \bm{x}_A)})]
\end{equation*}
Using that $f$ is convex, Jensen's inequality yields
\begin{equation*}
\FD{p(\bm{x})}{q(\bm{x})} \geq \EE_{\bm{x}_A \sim q(\bm{x}_A)} [ f(\EE_{\bm{x}_{-A} \sim q(\bm{x}_{-A} \mid \bm{x}_A)}\frac{p(\bm{x}_A) \, p(\bm{x}_{-A} \mid \bm{x}_A)}{q(\bm{x}_A) \, 	q(\bm{x}_{-A} \mid \bm{x}_A)})]
\end{equation*}
Simplifying the inner expectation on the right hand side, we conclude that
\begin{equation*}
\FD{p(\bm{x})}{q(\bm{x})} \geq \EE_{\bm{x}_A \sim q(\bm{x}_A)} [f( \frac{p(\bm{x}_A) }{q(\bm{x}_A)})]
\end{equation*}
\end{proof}

  \begin{lemma}[Kullback-Leibler's upper bound by Jensen-Shannon]
    \label{lem:Jupperbound}
      Assume that $p$ and $q$ are two probability distribution absolutely
      continuous with respect to each other. Moreover, assume that $q$ is
      bounded away from zero. Then, there exist a positive scalar $K$ such that
      \begin{equation}
      \KL{p}{q} \leq K \JS{p}{q}.
    \end{equation}
  \end{lemma}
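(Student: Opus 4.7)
The plan is to express both divergences as $f$-divergences of $p$ against the common reference $q$, reduce the lemma to a pointwise inequality between their generators, and then use the boundedness hypothesis to control the range of the likelihood ratio $r = p/q$.

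First, since $\int q\,(r - 1)\,d\lambda = 0$, I would use the ``centered'' KL generator $\tilde f(r) = r\log r - r + 1$, which satisfies $\tilde f(1) = \tilde f'(1) = 0$, $\tilde f''(r) = 1/r > 0$, and gives $\KL{p}{q} = \int q\,\tilde f(r)\,d\lambda$. A direct calculation yields the analogous representation $\JS{p}{q} = \int q\,g(r)\,d\lambda$ with $g(r) = \tfrac12\bigl[r\log\tfrac{2r}{r+1} + \log\tfrac{2}{r+1}\bigr]$, and similarly $g(1) = g'(1) = 0$, $g''(r) = \tfrac{1}{2r(r+1)} > 0$. Both generators are therefore non-negative, strictly convex on $(0,\infty)$, and vanish only at $r=1$.

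Next, I would study $h(r) := \tilde f(r)/g(r)$ on $(0, \infty)$. Away from $r=1$, $h$ is a quotient of continuous functions with non-vanishing denominator, hence continuous. At $r=1$, second-order Taylor expansion gives $\tilde f(r) = \tfrac12 (r-1)^2 + O((r-1)^3)$ and $g(r) = \tfrac14 (r-1)^2 + O((r-1)^3)$, so $h$ extends continuously with $h(1) = 2$. The hypothesis that $q$ is bounded away from zero, together with the mutual absolute continuity, is invoked precisely here: it forces the likelihood ratio to be essentially bounded, $r \leq M$ almost everywhere for some finite $M$. Continuity of $h$ on the compact interval $[0, M]$ then delivers $K := \sup_{r \in [0, M]} h(r) < \infty$.

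Finally, $\tilde f(r) \leq K\,g(r)$ holds pointwise on $[0, M]$ and, integrating against $q$ (both sides non-negative), one obtains $\KL{p}{q} \leq K\,\JS{p}{q}$, as claimed. The main obstacle is the indeterminate $0/0$ form of $h$ at $r=1$, which is resolved by the Taylor computation above; a second, more conceptual point is that the inequality genuinely requires the hypothesis, since as $r \to \infty$ one has $h(r) \sim 2\log r/\log 2 \to \infty$, so no universal $K$ could exist without an upper bound on the likelihood ratio.
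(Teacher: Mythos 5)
Your proof is correct and reaches the stated bound, but by a genuinely different route from the paper's. The paper first passes through the $\chi^2$-divergence, using the classical chain $\KL{p}{q}\le\log(1+\XX{p}{q})\le\XX{p}{q}$, and then compares the $\chi^2$ and Jensen--Shannon generators by asserting that their ratio $t\mapsto (t-1)^2/f_{JS}(t)$ is increasing on $[0,\infty)$, so that the bound $p/q\le c_2$ yields a bound on the generator ratio. You instead compare the (centered) KL generator to the JS generator directly, and replace the monotonicity claim by a softer compactness argument: the ratio $h=\tilde f/g$ extends continuously across the removable singularity at $r=1$, hence is bounded on $[0,M]$. Your route buys two things: it avoids the lossy intermediate step $\KL{p}{q}\le\XX{p}{q}$, so it produces a smaller constant, and it does not require verifying that the generator ratio is monotone (the paper asserts this without proof, and in fact only boundedness on $[0,c_2]$ is ever used there). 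Two small caveats. First, your Taylor coefficient for $g$ is off by a factor of two: with your normalization $g''(1)=1/4$, so $g(r)=\tfrac18(r-1)^2+O((r-1)^3)$ and $h(1)=4$ rather than $2$; this is immaterial since only finiteness of the limit enters the argument. Second, ``$q$ bounded away from zero together with mutual absolute continuity'' does not by itself make $r=p/q$ essentially bounded --- one also needs $p$ bounded above. The paper makes exactly the same implicit assumption when it writes $c_2:=\max_x p(x)/c_1$, but since you flag this step as the crux, you should state the boundedness of $p$ explicitly.
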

  \begin{proof}
    We start by bounding the Kullblack-Leibler divergence by the
    $\chi^2$-distance. We have
    \begin{equation}
      \KL{p}{q} \leq \int \log(\frac{p(\bm{x})}{q(\bm{x})}) \, dx \leq \log(1 + \XX{p}{q}) \leq \XX{p}{q}
    \end{equation}
    The first inequality follows by Jensen's inequality. The third inequality follows by the Taylor expansion.
    Recall that both the $\chi^2$-distance and the Jensen-Shanon divergences are
    f-divergences with generators given by $f_{\chi^2}(t) = (t - 1)^2$ and $f_{JS}(t)
    = u \log(\frac{2t}{t + 1}) + \log(\frac{2t}{t + 1})$, respectively.
    We form the function $t \mapsto h(t) = \frac{f_{\chi^2}(t)}{f_{JS}(t)}$. $h$
    is strictly increasing on $[0, \infty)$. Since we are assuming $q$ to be bounded away from zero,
    we know that there is a constant $c_1$ such that $q(\bm{x}) > c_1$ for all
    $\bm{x}$. Subsequently for all $\bm{x}$, we have that
    $\frac{p(x)}{q(x)} \leq c_2 := \max_{x} \frac{p(x)}{c_1}$.
    Thus, for all $x$ we have $h(\frac{p(x)}{q(x)}) \leq K:= h(c_2)$ and hence$
    f_{\chi^2}(\frac{p}{q}) \leq K \, f_{JS}(\frac{p(x)}{q(x)})$. Intergrating with respect to $q$, we conclude
    $$\KL{p}{q} \leq \XX{p}{q} \leq K \, \JS{p}{q}$$
  \end{proof}
  \begin{proposition}
    Assuming $q(\bm{x}, \bm{z}_l) $ and $p(\bm{x}, \bm{z}_l)$ are positive for any $l \in \{1, \dots, L\}$. We have
      \begin{equation}
        \EE_{\bm{x} \sim q(\bm{x})}[\mathcal{L}^l(\bm{x})] - H(\bm{x} \mid \bm{z}_l) \leq K \JS{p(\bm{x}, \bm{z}_1, \dots, \bm{z}_L)}{q (\bm{x}, \bm{z}_1, \dots, \bm{z}_L)})
      \end{equation}
      Where $H(\bm{x} \mid \bm{z}_l)$ is computed under the encoder's distribution $q(\bm{x}, \bm{z}_l)$
    \end{proposition}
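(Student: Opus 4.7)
The plan is to reduce the left-hand side to a conditional Kullback--Leibler divergence and then apply the two appendix lemmas to climb from that local KL up to a Jensen--Shannon divergence on the full joint.

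First I would rewrite the left-hand side as a single KL. Since $\mathcal{L}^l(\bm{x}) = \EE_{\bm{z}_l \sim q(\bm{z}_l \mid \bm{x})}[-\log p(\bm{x}\mid\bm{z}_l)]$ and the entropy is taken under $q$, the difference is
\begin{equation*}
\EE_{\bm{x} \sim q(\bm{x})}[\mathcal{L}^l(\bm{x})] - H_q(\bm{x} \mid \bm{z}_l) = \EE_{(\bm{x},\bm{z}_l)\sim q}\!\left[\log \frac{q(\bm{x}\mid\bm{z}_l)}{p(\bm{x}\mid\bm{z}_l)}\right] = \EE_{\bm{z}_l \sim q(\bm{z}_l)}\KL{q(\bm{x}\mid\bm{z}_l)}{p(\bm{x}\mid\bm{z}_l)}.
\end{equation*}

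Next I would move from the conditional KL to the joint KL via the chain rule, $\KL{q(\bm{x},\bm{z}_l)}{p(\bm{x},\bm{z}_l)} = \KL{q(\bm{z}_l)}{p(\bm{z}_l)} + \EE_{\bm{z}_l\sim q}[\KL{q(\bm{x}\mid\bm{z}_l)}{p(\bm{x}\mid\bm{z}_l)}]$. Dropping the non-negative marginal term gives
\begin{equation*}
\EE_{\bm{z}_l \sim q(\bm{z}_l)}\KL{q(\bm{x}\mid\bm{z}_l)}{p(\bm{x}\mid\bm{z}_l)} \leq \KL{q(\bm{x},\bm{z}_l)}{p(\bm{x},\bm{z}_l)}.
\end{equation*}

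Then I would invoke Lemma~\ref{lem:Jupperbound}; the positivity hypothesis on $q(\bm{x},\bm{z}_l)$ supplies the requisite boundedness away from zero, so there is a constant $K$ with $\KL{q(\bm{x},\bm{z}_l)}{p(\bm{x},\bm{z}_l)} \leq K\,\JS{q(\bm{x},\bm{z}_l)}{p(\bm{x},\bm{z}_l)}$. Finally, Lemma~\ref{lem:joint_f_bound} applied to the Jensen--Shannon $f$-divergence with the marginal subset $(\bm{x},\bm{z}_l)$ of $(\bm{x},\bm{z}_1,\dots,\bm{z}_L)$ yields
\begin{equation*}
\JS{q(\bm{x},\bm{z}_l)}{p(\bm{x},\bm{z}_l)} \leq \JS{q(\bm{x},\bm{z}_1,\dots,\bm{z}_L)}{p(\bm{x},\bm{z}_1,\dots,\bm{z}_L)},
\end{equation*}
and chaining the three inequalities closes the argument.

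The only real subtlety I anticipate is bookkeeping around the constant $K$: Lemma~\ref{lem:Jupperbound} is stated for a generic bounded-below $q$, and here I need it for $q(\bm{x},\bm{z}_l)$, so I should check that the proposition's hypothesis (bounding $q(\bm{x},\bm{z}_l)$ away from zero for every $l$) suffices to choose a single $K$ that works uniformly. The other steps---the chain rule for KL, dropping a non-negative term, and applying Jensen through the partial marginalization in Lemma~\ref{lem:joint_f_bound}---are routine.
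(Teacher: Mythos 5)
Your proof is correct and follows essentially the same route as the paper's: the paper's ``elementary manipulations'' step is exactly your decomposition of the left-hand side into an expected conditional KL plus the chain rule for the joint KL (with the marginal KL term then dropped), followed by the same two appendix lemmas in the same order. The one bookkeeping point to fix is that you invoke Lemma~2 on $\KL{q(\bm{x},\bm{z}_l)}{p(\bm{x},\bm{z}_l)}$ while citing positivity of $q(\bm{x},\bm{z}_l)$, whereas the lemma needs the \emph{second} argument, $p(\bm{x},\bm{z}_l)$, bounded away from zero --- the proposition's hypothesis does assume both are positive, and note that you actually get the KL direction right where the paper's own displayed identity reverses it.
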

    \begin{proof}
        By elementary manipulations we have.
        \begin{equation*}
          \EE_{\bm{x} \sim q(\bm{x})}[\LL^l(\bm{x})] = \KL{p(\bm{x}, \bm{z}_l)}{q(\bm{x}, \bm{z}_l)} + H(\bm{x}_l \mid \bm{z}_l) - \KL{p(\bm{z}_l)}{q(\bm{z}_l)}
        \end{equation*}
        Where the conditional entropy  $H(\bm{x}_l \mid \bm{z}_l)$ is computed
        $q(\bm{x}, \bm{z}_l)$. By the non-negativity of the KL-divergence we obtain
        \begin{equation*}
          \EE_{\bm{x} \sim q(\bm{x})}[\LL^l(\bm{x})] \leq \KL{p(\bm{x}, \bm{z}_l)}{q(\bm{x}, \bm{z}_l)} + H(\bm{x}_l \mid \bm{z}_l)
        \end{equation*}
        Using lemma~\ref{lem:Jupperbound}, we have

        \begin{equation*}
          \EE_{\bm{x} \sim p_{d}(\bm{x})}[\LL^l(\bm{x})] - H(\bm{x} \mid \bm{z}_l) \leq K \JS{p(\bm{x}, \bm{z}_l)}{q (\bm{x}, \bm{z}_l)})
          \end{equation*}
          The Jensen-Shanon divergence being f-divergence, using Lemma~\ref{lem:joint_f_bound}, we conclude
         \begin{equation*}
          \EE_{\bm{x} \sim p_{d}(\bm{x})}[\LL^l(\bm{x})] - H(\bm{x} \mid \bm{z}_l) \leq K \JS{p(\bm{x}, \bm{z}_1, \dots, \bm{z}_L)}{q (\bm{x}, \bm{z}_1, \dots, \bm{z}_L)}).
         \end{equation*} 
        \end{proof}
        
   \begin{proposition}
     \label{prop:rec_to_mi}
     For any given latent variable $\bm{z}_{l}$, the reconstruction likelihood
     $\EE_{\bm{x} \sim q_{\bm{x}}}[\EE_{\bm{z} \sim \M{\bm{z}_l}{\bm{x}}}[-\log p(\bm{x}
     \mid \bm{z}_{l})]]$ is an upper bound on $H(\bm{x} \mid \bm{z}_l)$.
   \end{proposition}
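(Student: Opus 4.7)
The plan is to unfold the outer expectation into a single expectation over the encoder's joint $q(\bm{x},\bm{z}_l)$, and then isolate the conditional entropy by adding and subtracting $\log q(\bm{x}\mid\bm{z}_l)$ inside the log term. Concretely, since $\bm{z}_l \sim \M{\bm{z}_l}{\bm{x}}$ is exactly sampling from the encoder conditional $q(\bm{z}_l\mid\bm{x})$, the left-hand side is
\begin{equation*}
\EE_{q(\bm{x},\bm{z}_l)}\bigl[-\log p(\bm{x}\mid\bm{z}_l)\bigr]
= \EE_{q(\bm{x},\bm{z}_l)}\Bigl[-\log q(\bm{x}\mid\bm{z}_l) + \log \tfrac{q(\bm{x}\mid\bm{z}_l)}{p(\bm{x}\mid\bm{z}_l)}\Bigr].
\end{equation*}

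The first term is, by definition, the conditional entropy $H(\bm{x}\mid\bm{z}_l)$ computed under the encoder distribution, as required by the statement of the proposition. The second term, after taking the outer expectation over $\bm{z}_l \sim q(\bm{z}_l)$, is exactly $\EE_{q(\bm{z}_l)}[\KL{q(\bm{x}\mid\bm{z}_l)}{p(\bm{x}\mid\bm{z}_l)}]$, which is non-negative by Gibbs' inequality. Dropping this non-negative term yields the claimed lower bound.

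There is no genuine obstacle here; the only points that require care are (i) being explicit that $\M{\bm{z}_l}{\bm{x}}$ and $q(\bm{z}_l\mid\bm{x})$ refer to the same kernel, so that the iterated expectation collapses to a single expectation under $q(\bm{x},\bm{z}_l)$, and (ii) ensuring the densities appearing in the ratio $q(\bm{x}\mid\bm{z}_l)/p(\bm{x}\mid\bm{z}_l)$ are well defined, which is guaranteed by the positivity assumption used already in Proposition~\ref{prop:js_to_rec}. Thus the proof reduces to one elementary identity plus one invocation of non-negativity of KL.
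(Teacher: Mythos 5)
Your proof is correct and follows essentially the same route as the paper's: both arguments amount to writing the cross-entropy $\EE_{q(\bm{x},\bm{z}_l)}[-\log p(\bm{x}\mid\bm{z}_l)]$ as the conditional entropy plus an expected Kullback--Leibler term and dropping the latter by non-negativity. If anything, your version is slightly cleaner in that it works directly with the joint expectation rather than bounding pointwise in $\bm{z}_l$ and then integrating.
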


    \begin{proof}
      By the non-negativity of the Kullback-Leibler divergence, we have that
      $$\KL{q(\bm{x} \mid \bm{z}_l)}{p(\bm{x} \mid \bm{z}_l)} \geq 0 \implies
      H[q(\bm{x} \mid \bm{z}_l)] \leq \EE_{
        \bm{z}_{l} \sim \M{\bm{z}_l}{\bm{x}}}[-\log(p(\bm{x} \mid \bm{z}_l))] $$.
      Integrating over the marginal and applying Fubini's theorem yields
      $$H(\bm{x} \mid \bm{z}_{l}) \leq \EE_{\bm{x} \sim p(x)}\EE_{\bm{z}_l \sim
        \M{\bm{z}_l}{\bm{x}}}[-\log(p(\bm{x} \mid \bm{z}_l))],$$
      where the conditional entropy $H(\bm{x} \mid \bm{z}_l)$ is computed under the
      encoder distribution.
    \end{proof}

\begin{figure}
\begin{minipage}{\textwidth}
  \begin{subfigure}[t]{0.49\textwidth}
    \centering
    \includegraphics[height=5cm,width=5cm]{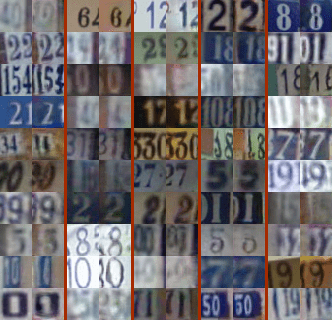}
    \caption{SVHN from $\bm{z}_1$}
  \end{subfigure}
  \begin{subfigure}[t]{0.49\textwidth}
    \centering
    \includegraphics[height=5cm,width=5cm]{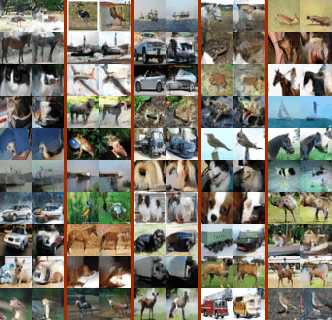}
    \caption{CIFAR10 from $\bm{z}_1$}
  \end{subfigure}
\end{minipage}%

\medskip
\begin{minipage}{\textwidth}
  \begin{subfigure}[t]{0.49\textwidth}
    \centering
    \includegraphics[height=5cm,width=5cm]{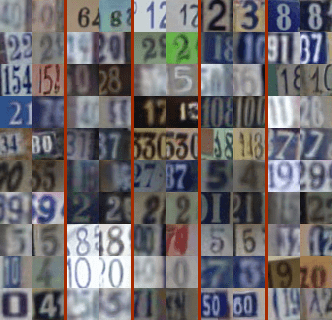}
    \caption{SVHN from $\bm{z}_2$}
  \end{subfigure}
  \begin{subfigure}[t]{0.49\textwidth}
    \centering
    \includegraphics[height=5cm,width=5cm]{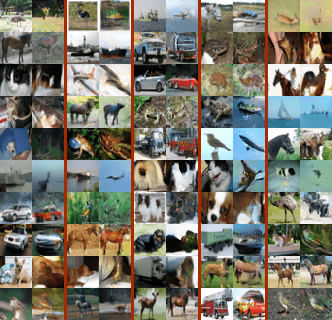}
    \caption{CIFAR10 from $\bm{z}_2$}
  \end{subfigure}
\end{minipage}%
  \caption{Reconstructions for SVHN and CIFAR10 from $\bm{z}_1$ and reconstructions from $\bm{z}_2$. 
    Odd columns corresponds to examples from the validation set while even columns are 
    the model's reconstructions} \label{fig:recons_appendix}
\end{figure}

\newpage

\begin{table}
  \small
  \centering
\newcommand*\rot{\rotatebox{80}}

\scalebox{0.68}{
\begin{tabular}{@{} cl*{20}c @{}}
    \toprule
    & & \rot{Sideburns} &         
    \rot{Black Hair} &
    \rot{Wavy Hair} &
    \rot{Young} &
    \rot{Makeup} &
    \rot{Blond} &
    \rot{Attractive} &
    \rot{withShadow} &
    \rot{withNecktie} &
    \rot{Blurry} &
    \rot{DoubleChin} &
    \rot{BrownHair} &
    \rot{Mouth Open} &
    \rot{Goatee} &
    \rot{Bald} &
    \rot{PointyNose} &
    \rot{Gray Hair} &
    \rot{Pale Skin} &
    \rot{ArchedBrows} &
    \rot{With Hat} \\

    & Data & 86 & 81 & 69 & 91 & 90 & 88 & 81 & 80 & 80 & 60 & 70 & 74 & 92 & 88 & 77 & 60 & 84 & 72 & 69 & 86 \\ 
    & VAE & 80 & \textbf{96} & 60 & \textbf{98} & 79 & 74 & 83 & 74 & \textbf{93} & \textbf{96} & 73 & 77 & 83 & 77 & 89 & 86 & 69 & 86 & 80 & 84 \\
    & ALI & 72 & 90 & 83 & 94 & 88 & 81 & 91 & 77 & 83 & 83 & \textbf{89} & 76 & 77 & 72 & 83 & 87 & 84 & 79 & 92 & 92 \\
    & HALI (z1) & \textbf{92} & 93 & \textbf{93} & 98 & \textbf{91} & \textbf{95} & \textbf{95} & \textbf{92} & 89 & 92 & 81 & \textbf{95} & \textbf{92} & \textbf{89} & \textbf{94} & \textbf{98} & \textbf{88} & 90 & \textbf{95} & \textbf{99} \\
    & HALI (z2) & 80 & 78 & 91 & 96 & 89 & 94 & 92 & 78 & 90 & 83 & 88 & 80 & 83 & 78 & 88 & 89 & 87 & \textbf{91} & 87 & 84 \\

    \midrule

    & & \rot{Balding} &
    \rot{StraightHair} &
    \rot{Big Nose} &
    \rot{Rosy Cheeks}  &
    \rot{Oval Face}  &
    \rot{Bangs}   &
    \rot{Male}   &
    \rot{Mustache}   &
    \rot{HighCheeks}  &
    \rot{No Beard}   &
    \rot{Eyeglasses}  &
    \rot{BaggyEyes}  &
    \rot{WithNecklace}  &
    \rot{WithLipstick}  &
    \rot{Big Lips}  &
    \rot{NarrowEyes}  &
    \rot{Chubby}  &
    \rot{Smiling}  &
    \rot{BushyBrows}  &
    \rot{WithEarrings} \\

    & Data & 72 & 60 & 69 & 71 & 56 & 89 & 97 & 80 & 85 & 96 & 94 & 69 & 54 & 94 & 54 & 58 & 71 & 92 & 72 & 74 \\
    & VAE  & 89 & \textbf{94} & 69 & 67 & 83 & 78 & 93 & 82 & 91 & 98 & 73 & 65 & 92 & 86 & 82 & 78 & 73 & 95 & 64 & 62 \\
    & ALI  & 86 & 87 & 89 & \textbf{82} & 89 & 76 & 90 & 77 & 86 & 92 & 81 & 87 & 93 & 88 & 83 & 85 & \textbf{90} & 85 & 82 & 83 \\
    & HALI (z1) & \textbf{96} & 91 & \textbf{90} & 71 & \textbf{91} & \textbf{94} & \textbf{97} & \textbf{84} & \textbf{96} & 98 & \textbf{90} & \textbf{94} & 93 & \textbf{94} & 83 & \textbf{88} & 82 & \textbf{97} & \textbf{91} & 83 \\
    & HALI (z2) & 88 & 88 & 89 & 75 & 90 & 84 & 90 & 80 & 89 & 95 & 75 & 86 & \textbf{99} & 90 & \textbf{88} & 87 & 87 & 89 & 85 & 81 \\

\bottomrule
\end{tabular}
}
  \caption{\small CelebA attributes accuracies of reconstructions by different models. 
  		   The data row displays the raw average of
  		   positive accuracies, predicting true 1, and negative accuracies, 
           predicting a true 0 by our
           VGG classifier. Other rows show the same average accuracies where
           each individual accuracy is normalized  by its corresponding data score. 
           The numbers are all percentages.}
  \label{table:celeba_accuracies}
\end{table}

\begin{table}
  \small
  \centering
  \newcommand*\rot{\rotatebox{80}}

  \scalebox{0.62}{
    \begin{tabular}{@{} cl*{20}c @{}}
      \toprule
      & & \rot{  5 o Clock Shadow   } & 
                                        \rot{  Arched Eyebrows      } & 
                                                                        \rot{  Attractive            } & 
                                                                                                         \rot{  Bags Under Eyes     } & 
                                                                                                                                        \rot{  Bald                  } & 
                                                                                                                                                                         \rot{  Bangs                 } & 
                                                                                                                                                                                                          \rot{  Big Lips             } & 
                                                                                                                                                                                                                                          \rot{  Big Nose             } & 
                                                                                                                                                                                                                                                                          \rot{  Black Hair           } & 
                                                                                                                                                                                                                                                                                                          \rot{  Blond Hair           } & 
                                                                                                                                                                                                                                                                                                                                          \rot{  Blurry                } & 
                                                                                                                                                                                                                                                                                                                                                                           \rot{  Brown Hair           } & 
                                                                                                                                                                                                                                                                                                                                                                                                           \rot{  Bushy Eyebrows       } & 
                                                                                                                                                                                                                                                                                                                                                                                                                                           \rot{  Chubby                } & 
                                                                                                                                                                                                                                                                                                                                                                                                                                                                            \rot{  Double Chin          } & 
                                                                                                                                                                                                                                                                                                                                                                                                                                                                                                            \rot{  Eyeglasses            } & 
                                                                                                                                                                                                                                                                                                                                                                                                                                                                                                                                             \rot{  Goatee                } & 
                                                                                                                                                                                                                                                                                                                                                                                                                                                                                                                                                                              \rot{  Gray Hair            } & 
                                                                                                                                                                                                                                                                                                                                                                                                                                                                                                                                                                                                              \rot{  Heavy Makeup         } & 
                                                                                                                                                                                                                                                                                                                                                                                                                                                                                                                                                                                                                                              \rot{  High Cheekbones      } \\

      & Triplet-kNN &  66          & 73          & 83          & 63          & 75          & 81          & 55          & 68          & 82          & 81          & 43          & 76          & 68          & 64          & 60          & 82          & 73          & 72          & 88          & 86           \\ 
      & PANDA &  76    & 77    & 85    & 67    & 74    & 92    & 56    & 72    & 84    & 91    & 50    & 85    & 74    & 65    & 64    & 88    & 84    & 79    & 95    & 89     \\ 
      & Anet       &  81          & 76          & 87          & 70          & 73          & 90          & 57          & 78          & 90          & 90          & 56          & 83          & \textbf{82} & 70          & 68          & 95          & 86          & 85          & 96          & 89           \\ 
      & LMLE-kNN    &  82          & \textbf{79} & \textbf{88} & 73          & 90          & \textbf{98} & 60          & \textbf{80} & \textbf{92} & \textbf{99} & 59          & \textbf{87} & \textbf{82} & 79          & 74          & \textbf{98} & \textbf{95} & 91          & \textbf{98} & \textbf{92}  \\ 
      & VAE &  78  & 65  & 62  & 68  & 87  & 86  & 58  & 67  & 75  & 83  & 64  & 62  & 72  & 77  & 80  & 81  & 80  & 88  & 75  & 75   \\ 
      & ALI &  78  & 70  & 69  & 68  & 89  & 87  & 57  & 69  & 75  & 88  & 65  & 64  & 71  & 78  & 78  & 85  & 79  & 89  & 79  & 64   \\ 
      & HALI(Unsup) &  \textbf{86}& 77         & 80         & \textbf{78}& \textbf{94}& 93         & \textbf{62}& 74         & 85         & 92         & 78         & 77         & 82         & \textbf{85}& \textbf{86}& 96         & 92         & \textbf{93}& 89         & 85          \\

      \midrule

      & & \rot{  Male                  } & 
                                           \rot{  Mouth Slightly Open } & 
                                                                          \rot{  Mustache              } & 
                                                                                                           \rot{  Narrow Eyes          } & 
                                                                                                                                           \rot{  No Beard             } & 
                                                                                                                                                                           \rot{  Oval Face            } & 
                                                                                                                                                                                                           \rot{  Pale Skin            } & 
                                                                                                                                                                                                                                           \rot{  Pointy Nose          } & 
                                                                                                                                                                                                                                                                           \rot{  Receding Hairline    } & 
                                                                                                                                                                                                                                                                                                           \rot{  Rosy Cheeks          } & 
                                                                                                                                                                                                                                                                                                                                           \rot{  Sideburns             } & 
                                                                                                                                                                                                                                                                                                                                                                            \rot{  Smiling               } & 
                                                                                                                                                                                                                                                                                                                                                                                                             \rot{  Straight Hair        } & 
                                                                                                                                                                                                                                                                                                                                                                                                                                             \rot{  Wavy Hair            } & 
                                                                                                                                                                                                                                                                                                                                                                                                                                                                             \rot{  Wearing Earrings     } & 
                                                                                                                                                                                                                                                                                                                                                                                                                                                                                                             \rot{  Wearing Hat          } & 
                                                                                                                                                                                                                                                                                                                                                                                                                                                                                                                                             \rot{  Wearing Lipstick     } & 
                                                                                                                                                                                                                                                                                                                                                                                                                                                                                                                                                                             \rot{  Wearing Necklace     } & 
                                                                                                                                                                                                                                                                                                                                                                                                                                                                                                                                                                                                             \rot{  Wearing Necktie      } & 
                                                                                                                                                                                                                                                                                                                                                                                                                                                                                                                                                                                                                                             \rot{  Young                 } \\

      & Triplet-kNN &  91          & 92          & 57          & 47          & 82          & 61          & 63          & 61          & 60          & 64          & 71          & 92          & 63          & 77          & 69          & 84          & 91          & 50          & 73          & 75           \\ 
      & PANDA &  99    & 93    & 63    & 51    & 87    & 66    & 69    & 67    & 67    & 68    & 81    & 98    & 66    & 78    & 77    & 90    & 97    & 51    & 85    & 78     \\ 
      & Anet        &  99          & 96          & 61          & 57          & 93          & 67          & 77          & 69          & 70          & 76          & 79          & 97          & 69          & 81          & 83          & 90          & 95          & 59          & 79          & 84           \\ 
      & LMLE-kNN    &  \textbf{99} & \textbf{96} & 73          & 59          & \textbf{96} & \textbf{68} & 80          & \textbf{72} & 76          & 78          & 88          & \textbf{99} & \textbf{73} & \textbf{83} & \textbf{83} & \textbf{99} & \textbf{99} & 59          & \textbf{90} & \textbf{87}  \\ 
      & VAE &  78  & 67  & 81  & 60  & 79  & 51  & 86  & 59  & 79  & 79  & 79  & 81  & 55  & 69  & 65  & 84  & 78  & 67  & 83  & 69   \\ 
      & ALI &  83  & 52  & 82  & 62  & 79  & 54  & 85  & 61  & 78  & 80  & 77  & 68  & 60  & 72  & 67  & 91  & 82  & 67  & 82  & 71   \\ 
      & HALI(Unsup) &  96         & 88         & \textbf{90}& \textbf{72}& 90         & 65         & \textbf{89}& 69         & \textbf{84}& \textbf{89}& \textbf{91}& 91         & 70         & 77         & 78         & 95         & 92         & \textbf{71}& 89         & 80          \\
      
      \bottomrule
    \end{tabular}
  }

  \caption{\small Mean per-class balanced accuracy in percentage points of each of the
    40 face attributes on CelebA.}
   \label{table:att_acc_pred}
\end{table}

\end{document}